\documentclass[11pt]{amsart}

%% Language and font encodings
\usepackage[english]{babel}
\usepackage[utf8x]{inputenc}
\usepackage[T1]{fontenc}

%% Sets page size and margins
\usepackage[a4paper,top=3cm,bottom=2cm,left=3cm,right=3cm,marginparwidth=1.75cm]{geometry}

\usepackage{amsmath, amssymb, epsfig}
\usepackage{mathrsfs}
\usepackage{color}
\usepackage[obeyspaces]{url}
\usepackage{algorithmicx}
\usepackage[ruled]{algorithm}
\usepackage{algpseudocode}
\usepackage{graphicx}
\usepackage{caption}
\usepackage{subcaption}
\usepackage{float}
\usepackage{amsthm}
\usepackage{multirow}
\usepackage{booktabs}
\usepackage{soul}
\usepackage{longtable}
\usepackage{wrapfig}
\usepackage{comment}
\usepackage{environ}

%For computational graph plot
\usepackage{pgf,tikz,pgfplots}
\pgfplotsset{compat=1.15}
\usepackage{mathrsfs}
\usetikzlibrary{arrows}
\usepackage{amssymb,fancyhdr,txfonts,pxfonts}

\usepackage{fourier} % Euler and Utopia font; bolder than your

\usepackage{bm} % boldmath must be called after the package
\usepackage{fullpage}

\usepackage{hyperref}

\theoremstyle{plain}
\newtheorem{theorem}{Theorem}[section]

\newtheorem{lemma}[theorem]{Lemma}

\theoremstyle{definition}
\newtheorem{definition}[theorem]{Definition}

\newtheorem*{remark}{Remark}

\numberwithin{theorem}{section}
\numberwithin{equation}{section}

\makeatletter
\newcommand{\vast}{\bBigg@{3}}
\newcommand{\Vast}{\bBigg@{5}}
\makeatother

\DeclareMathOperator*{\supp}{supp}
\DeclareMathOperator*{\rank}{rank}

\DeclareMathOperator*{\argmin}{arg min}

\newcommand{\vct}[1]{\bm{#1}}

\def \x {\vct{x}}

\def \rank {{\rm rank }}

\newcommand{\vd}{{\mathbf{d}}}

\newcommand{\vs}{{\mathbf{s}}}

\newcommand{\vv}{{\mathbf{v}}}
\newcommand{\vw}{{\mathbf{w}}}
\newcommand{\vx}{{\mathbf{x}}}

\newcommand{\vone}{{\mathbf{1}}}

\newcommand{\vA}{{\mathbf{A}}}
\newcommand{\vB}{{\mathbf{B}}}

\newcommand{\vE}{{\mathbf{E}}}
\newcommand{\vF}{{\mathbf{F}}}
\newcommand{\vG}{{\mathbf{G}}}

\newcommand{\vI}{{\mathbf{I}}}

\newcommand{\vS}{{\mathbf{S}}}

\newcommand{\vU}{{\mathbf{U}}}

\newcommand{\vW}{{\mathbf{W}}}
\newcommand{\vX}{{\mathbf{X}}}
\newcommand{\vY}{{\mathbf{Y}}}
\newcommand{\vZ}{{\mathbf{Z}}}

\newcommand{\vPhi}{{\mathbf{\Phi}}}

\newcommand{\cL}{{\mathcal{L}}}

\newcommand{\loss}{L}

\algtext*{EndProcedure}
\algtext*{EndFor}
\algtext*{EndWhile}

\usepackage[english]{babel}
\usepackage[utf8x]{inputenc}
\usepackage[T1]{fontenc}

%% Sets page size and margins
\usepackage[a4paper,top=3cm,bottom=2cm,left=3cm,right=3cm,marginparwidth=1.75cm]{geometry}

\makeatletter
\newsavebox{\measure@tikzpicture}
\NewEnviron{scaletikzpicturetowidth}[1]{%
  \def\tikz@width{#1}%
  \def\tikzscale{1}\begin{lrbox}{\measure@tikzpicture}%
  \BODY
  \end{lrbox}%
  \pgfmathparse{#1/\wd\measure@tikzpicture}%
  \edef\tikzscale{\pgfmathresult}%
  \BODY
}
\makeatother

\begin{document}

\title{Neural Nonnegative Matrix Factorization for Hierarchical Multilayer Topic Modeling}

\thanks{DN, JH, ES, JV, and DM are grateful to and were partially supported by NSF CAREER DMS \#1348721 and NSF BIGDATA \#1740325.  This work is based upon work completed at the UCLA CAM REU during Summer 2018 which was funded by NSF DMS \#1659676.  JH is additionally grateful to and was partially supported by NSF DMS \#2211318.
}

\author{Tyler Will}

\author{Runyu Zhang}

\author{Eli Sadovnik}

\author{Mengdi Gao}

\author{Joshua Vendrow}

\author{Jamie Haddock}

\author{Denali Molitor}

\author{Deanna Needell}

\begin{abstract}
We introduce a new method based on nonnegative matrix factorization, Neural NMF, for detecting latent hierarchical structure in data. Datasets with hierarchical structure arise in a wide variety of fields, such as document classification, image processing, and bioinformatics. Neural NMF recursively applies NMF in layers to discover overarching topics encompassing the lower-level features.
We derive a backpropagation optimization scheme that allows us to frame hierarchical NMF as a neural network. We test Neural NMF on a synthetic hierarchical dataset, the 20 Newsgroups dataset, and the MyLymeData symptoms dataset.
Numerical results demonstrate that Neural NMF outperforms other hierarchical NMF methods on these data sets and offers better learned hierarchical structure and interpretability of topics.
\end{abstract}

\maketitle

\section{Introduction}

As the size of available data continues to grow, scalable approaches for extracting meaningful latent trends within large-scale data, or reducing redundant information within the data have become active and important focuses of research.
Within this flourishing area of research, topic modeling approaches have received particular interest; topic modeling is a popular class of machine learning techniques that cluster and classify observations to reveal latent themes in a dataset.  Algorithms for topic modeling often find application in the domain of document classification and clustering~\cite{BNJ03,gaussier2005relation,shahnaz2006document,xu2003document,berry2005email,pauca2004text}, but more recently have found use in
applications
such as
image processing
~\cite{guillamet2002non,hoyer2002non,LSE99,FP05},
financial data mining~\cite{de2008analysis},
audio processing \cite{cichocki2006new,gemmeke2013exemplar}, genetics \cite{liu2017regularized}, and bioinformatics~\cite{LTD16}.
An approach related to, but distinct from, topic modeling is feature extraction. While topic modeling seeks to abstract the dataset and represent data points by these topics, feature extraction aims to find a few key, representative features that best represent the data set for the task at hand (e.g., classification) \cite{GEG06}.  Many of the most popular models for these tasks are built upon, or are simply, dimension-reduction techniques which aim to reduce the dimension of the representation of the data; see e.g.,~\cite{bishop2006pattern}.

Nonnegative matrix factorization (NMF) is a popular method in machine learning because it is able to both perform feature extraction and generate topic models \cite{BUC08,KCP15}.
Users of NMF specify
the number of topics believed to be in the dataset; the model then identifies
representative dictionary elements (topics) and coefficients which represent each element of the data set in terms of the topics (thereby lowering the representative dimension of the data).
Users often explore topics at different resolutions (number of topics) and seek relationships between the topics learned at various levels of granularity.
NMF, however, does not inherently discover the relationships between topics learned at these different levels. A popular alternative, known as \emph{hierarchical NMF (HNMF)}, is to sequentially apply NMF to learn the relationship between NMF topics at different levels of granularity.

 Borrowing techniques from neural networks, we seek to modify HNMF to

 illustrate the relationships between topics learned at differing levels of granularity, and to specifically provide a hierarchical representation of how topics at finer granularity relate to topics at courser granularity, while avoiding the often high approximation error of naive application of HNMF. An advantage of methods which illustrate hierarchical relationships among the topics
 over classical NMF is that practitioners can examine the results for multiple numbers of topics without running NMF multiple times.  Additionally, while the recovery error increases as the number of layers in a hierarchical model increases, these models have the desirable property that they immediately illustrate the hierarchical structure of the latent topics.   In applications in which the learned sub-topics represent known data clusters, the learned hierarchy could provide unknown cluster-level relationships within the data.

 We propose a new hierarchical NMF method, which we denote \emph{Neural NMF}, and illustrate its promise on a synthetic hierarchical data set, the 20 Newsgroups dataset, and MyLymeData, a real dataset containing survey data of Lyme disease patients collected by Lymedisease.org.\footnote{An extended abstract of this work appeared in \emph{Proc. Interational Workshop on Computational Advances in Multi-Sensor Adaptive Processing (CAMSAP) 2019}~\cite{GHMNSWZ19}.} On the synthtic data set, Neural NMF outerperforms other hierarchical NMF methods in both reconstruction error and classification accuracy. On the 20 Newsgroups dataset, Neural NMF outperforms other methods in classification accuracy and offers a better hierarchical structure and interpretability of the topics. We also illustrate the ability of Neural NMF to identify meaningful, and even surprising, hierarchical topic structure on real world data with the MyLymeData set.  These initial results indicate that the Neural NMF method can vastly improve both the reconstruction of the overall hierarchical NMF model as well as improving the learned inter-layer hierarchical structure and interpretability of topics at each layer.

 \subsection{Organization}
The remainder of our paper is organized as follows. In the remainder of the introduction, we briefly introduce notations and conventions in Section~\ref{subsec:notation}; introduce the foundational models which inspired our own: NMF in Section~\ref{sec:intro to nmf}, semi-supervised NMF in Section~\ref{subsec:SSNMF}, hierarchical NMF in Section~\ref{sec:HNMF}, and Deep NMF in Section~\ref{subsec:DNMF}; and finally review briefly some further related work in Section~\ref{subsec:relatedwork}.  In Section~\ref{sec:proposedmethod}, we introduce our approach which consists of a forward-propogation process
and a backward-propogation process (detailed in Section~\ref{subsec:backwardpropogation}; here we also include the statements of our main theoretical results that derive the necessary gradient information for backpropagation). In Section~\ref{sec:experimentalresults}, we empirically test Neural NMF on a small, synthetic dataset, the 20 Newsgroups dataset, and the MyLymeData dataset. Finally, in Section~\ref{sec:conclusion} we offer some conclusions and future directions.  The proofs of our main theoretical results are left to Appendix~\ref{sec:appendix} so as to not distract from the main focus of the paper: the promise of Neural NMF in applications.

\subsection{Notation}\label{subsec:notation}
We distinguish matrices and vectors from scalar quantities using bold font. For a matrix $\vF$, the notations $\vF_{i, :}$ and $\vF_{:, j}$ denote row $i$ and column $j$, respectively.  For sets of indices $T$ and $S$, we take $\vF_{T,:}$ and $\vF_{:,S}$ to mean the matrix obtained by restricting to the rows of $\vF$ in $T$ or the columns of $\vF$ in $S$, respectively.  By extension, $\vv_T$ is the vector $\vv$ restricted to the entries with indices in T. We denote the Moore-Penrose pseudoinverse of $\vF$ as $\vF^\dagger$.  Entrywise (Hadamard) multiplication and division between $\vF$ and $\vG$ are denoted by $\vF \odot \vG$ and $\frac{\vF}{\vG}$, respectively. The vector of length $k$ with all entries equal to one is denoted $\vone_k$, while $\vI$ indicates the identity matrix of compatible dimensions in all circumstances.  We perform subscript (indicial) operations before superscript (e.g., pseudoinversion, transposition) operations whenever applicable. The set $[0, \infty)^k$ is denoted $\mathbb{R}_+^{k}$.  In methods with $\cL$ layers, we use $\vF^{(\ell)}$ to mean the value of matrix $\vF$ at layer $\ell$.  We similarly use $k^{(\ell)}$ as the number of topics at layer $\ell$.  In any supervised setting, we use $P$ as the number of total classes. Finally, we denote the set of integers $\{1, 2, \dots, m\}$ as $[m]$. We let $\|\cdot\|$ denote the Frobenius norm ($\ell_2$ norm in the case of vector input) throughout, unless otherwise noted.

\subsection{NMF}
\label{sec:intro to nmf}
For a given data matrix $\vX \in \mathbb{R}_+^{N \times M} $ and model rank $k$, nonnegative matrix factorization (NMF) seeks  $\vA \in \mathbb{R}_+^{N \times k}$ and $\vS \in \mathbb{R}_+^{k \times M}$ such that $\vX \approx \vA\vS$.  To find $\vA$ and $\vS$, we wish to solve the optimization problem
\begin{equation}
\label{eq:nmf objective function}
\min_{ \vA \geq 0, \vS \geq 0} ||\vX - \vA\vS||^2.
\end{equation}
The nonnegativity restriction on $\vX$, $\vA$, and $\vS$ differentiates NMF from other topic modeling and feature extraction techniques, including principal component analysis (PCA) and autoencoding. Since all values are nonnegative, only additive combinations are allowed to approximate every data point in $\vX$.  This allows for a natural and intuitive `parts-based' representation \cite{LSE99}; there are no topics that contribute in a negative manner.
We can view the columns of the $\vA$ matrix as vectors of important features of $\vX$, or from a topic modeling perspective, as $k$ hidden themes in our data. In this view, the $\vS$ matrix provides the coefficients to represent each data point of $\vX$ as a linear combination of the hidden themes (the columns of $\vA$).  Moreover, we can view the product $\vA\vS$ as a low-rank approximation of $\vX$, since

\begin{equation}
\label{eq:rank bound AS}
\rank (\vA\vS) \leq \min (\rank(\vA), \rank(\vS)) \leq k.
\end{equation}

\par The problem \eqref{eq:nmf objective function} is convex in $\vA$ and $\vS$ separately (holding the other fixed), but nonconvex in both.  We therefore cannot expect to consistently find a global minimum as a solution.  Several techniques exist for finding local minima; most schemes alternate fixing $\vA$ and $\vS$ while updating the other variable in order to iteratively decrease \eqref{eq:nmf objective function}.  One of the most popular of these methods is the multiplicative updates methods \cite{lee2001algorithms}.
Other algorithms for approximating the solution to \eqref{eq:nmf objective function} include the alternating least squares and rank-one residue iteration (RRI) methods \cite{N08}.  Although these offer potential advantages over a multiplicative updates scheme, the latter can be readily extended to deal with more complex cost functions, such as that of semisupervised NMF, which is discussed in the next section.

\subsection{Semisupervised NMF (SSNMF)}\label{subsec:SSNMF}
A natural extension of NMF is to take advantage of any known label information in the factorization \cite{LYC10}.  Suppose $\vY \in \mathbb{R}^{P \times M}$ is a matrix containing label information for $M$ objects in $P$ classes.  Let $\vW \in \mathbb{R}^{N \times M}$ be the binary indicator matrix for the data, that is \begin{equation*}
\vW_{n,m} = \begin{cases}
1 &  \vX_{n,m} \text{ is known} \\
0 & \text{otherwise,}
\end{cases}
\end{equation*} and let $\vZ \in \mathbb{R}^{P \times M}$ be the binary indicator matrix for the labels, with
\[ \vZ_{:, m} =
\begin{cases}
\vone_P & \text{label for object } m \text{ is known} \\
0 & \text{otherwise}.
\end{cases}
\]
We can incorporate label information and manage missing data by adjusting our problem to
\begin{equation}
\label{eq:ssnmf objective function}
\min_{\vA \geq 0, \vS \geq 0, \vB \geq 0} \underbrace{\|\vW \odot(\vX-\vA\vS)\|_F^2}_\text{Reconstruction Error} + \lambda \underbrace{\|\vZ \odot(\vY-\vB\vS) \|_F^2}_\text{Classification Error}.
\end{equation} \\
The resulting $\vB \in \mathbb{R}^{P \times k}$ is a linear classification matrix, with $\vB_{i, :}$
indicating the association between class $i$ and each of the learned topics.
The error \eqref{eq:ssnmf objective function} balances the reconstruction error on the data that are known with the classification error on the labels that are known.  The relative importance of the classification error is controlled by the user-defined hyperparameter $\lambda$. The multiplicative update minimization strategy discussed in Section \ref{sec:intro to nmf} can be extended to handle this setting \cite{LYC10}.  Generalizations of this SSNMF model have been recently proposed in~\cite{AGHKKLLMMNSW20}.

\subsection{Hierarchical NMF (HNMF)}
\label{sec:HNMF}
\par A further extension of both NMF and semisupervised NMF seeks to illuminate hierarchical structure by recursively factorizing the $\vS$ matrices.  By performing NMF with $k = k^{(0)}$, we reveal $k^{(0)}$ hidden topics in the data; by repeating the factorization on the $\vS$ matrix with $k = k^{(1)}$, we further collect the data into $k^{(1)}$ supertopics among the $k^{(0)}$ topics.  This process for $\cL$ layers approximately factors the data matrix as
\begin{align}
\begin{split}
\vX &\approx \vA^{(0)}\vS^{(0)}, \\
\vX &\approx \vA^{(0)}\vA^{(1)}\vS^{(1)}, \\
& \vdots \\
\vX &\approx \vA^{(0)}\vA^{(1)}\cdots \vA^{(\cL)}\vS^{(\cL)}.
\end{split}
\end{align}
The $\vA^{(i)}$ matrix represents the how the subtopics at layer $i$ collect into the supertopics at layer $i+1$.

Note that as $\cL$ increases, the error
\begin{equation}
\|\vX - \vA^{(0)}\vA^{(1)} \cdots \vA^{(\cL)}\vS^{(\cL)}\|_F \label{eq:hNMFfroerr}
\end{equation}
necessarily increases as error propagates with each step.  As a result, significant error is introduced when $\cL$ is large.  Choosing $k^{(0)}, k^{(1)}, ... k^{(\cL)}$ in practice proves difficult when the number of topics at each layer is unknown, as the number of possibilities grow combinatorially.  Additionally, large differences between the number of topics for adjacent layers introduces large error into the factorization.  To alleviate the error propagation between layers, HNMF can be endowed with the structure of a neural network. The next sections focus on developing those extensions, inspired by \cite{FH18}.

Finally, as the NMF problem is ill-posed and has an infinite number of global minima (e.g., one may appropriately rescale each of the factors), this ill-posedness is exacerbated in HNMF.  There are unicity results for NMF when the matrix $\vX$ satisfies specific constraints (see e.g., \cite{donoho2004does, huang2013non, gillis2012sparse, fu2018nonnegative, laurberg2008theorems}), but we do not know of such results for the HNMF model; this is an important theoretical direction for future investigation.

\subsection{Deep NMF (DNMF)}\label{subsec:DNMF}

In \cite{FH18}, the authors make a first step toward bridging the gap between HNMF and neural networks.  Their method achieves a considerable performance improvement over standard NMF in classification.
The forward process for DNMF is HNMF with pooling operator, $p$, applied after each layer of decomposition to introduce nonlinearity and minimize overfitting.  Without the pooling operation, the DNMF model is identical to HNMF.  The other major contribution of \cite{FH18} is a proposed backpropagation algorithm meant to refine the result obtained from the forward process.  However, the backpropagation technique introduced in \cite{FH18} differs from backpropagation techniques in neural network settings, as it only propagates one layer at a time and uses multiplicative updates instead of gradient descent to update the values of $\vA$ and $\vS$.

\subsection{Other Related Work}\label{subsec:relatedwork}

Similar ideas were explored in \cite{trigeorgis2016deep}, \cite{le2015deep}, and \cite{SNT17}.  In \cite{trigeorgis2016deep}, the authors develop a hierarchical model in which some of the nonnegativity constraints are relaxed; however, this lacks our proposed backpropagation algorithm for training the model.  In \cite{le2015deep}, the authors propose a NMF backpropagation algorithm using an ``unfolding" approach; however, their method does not allow for hierarchy.  Finally, a method similar to ours was developed in \cite{SNT17}, but differs in that it lacks the nonnegativity constraints that makes our method applicable in topic modeling and feature extraction.

\section{Proposed Method: Neural NMF}\label{sec:proposedmethod}

The proposed backpropagation of DNMF \cite{FH18} behaves differently than the backpropagation technique typically employed by neural networks. Traditional backpropagation determines the gradient of a cost function with respect to \emph{all} the weights in the network, so that all the weights in the network may be updated at once. In \cite{FH18}, the update for $\vS^{(\ell)}$ only depends on
$\vA^{(\ell+1)}$ and $\vS^{(\ell+1)}$. One of the barriers to formulating a proper backpropagation step for DNMF is that in optimization methods like multiplicative updates \cite{lee2001algorithms} or alternating least squares \cite{paatero1994positive}, the $\vA$ and $\vS$ matrices take turns acting as the independent and dependent variables in the updates. This is in contrast to
neural networks,
where the weights connecting neurons between layers are always the independent variables, while the activations of the neurons are dependent on the weights. This separation of independent and dependent variables allows for calculation of the derivatives of the dependent variables with respect to the independent variables in a relatively simple way.
\begin{wrapfigure}{R}{0.6\textwidth}
\begin{minipage}[b]{0.6\textwidth}
\begin{algorithm}[H]
	\caption{Neural NMF}
	\label{alg:Neural NMF}
	\begin{algorithmic}
		\Require data matrix $\vX \in \mathbb{R}^{N \times M}$, number of layers $\cL$, step size $\gamma$, cost function $\loss$, initial matrices $\vA^{(i)}$ for $i = 0,..., \cL$
		\While{not converged}
		\State \text{ForwardPropagation}({$\vA^{(0)}, ..., \vA^{(\ell)}$})
		\For{$i := 0, ..., \cL$}
		\State $\vA^{(i)} \leftarrow \vA^{(i)} - \gamma * \frac{\partial \loss}{\partial \vA^{(i)}}$
		\State $\vA^{(i)} \leftarrow \vA^{(i)}_+$
		\EndFor
		\EndWhile
	\end{algorithmic}
\end{algorithm}
\end{minipage}
\end{wrapfigure}

To proceed, we choose to regard the $\vA$ matrices as the independent variables in our model. This is natural since the $\vS$ matrix at one layer is ``passed'' to the next layer for factorization, analogous to the neurons' activations in a neural network being passed to the next layer. In this analogy, the entries of the $\vA$ matrix now act as the weights between the neurons.

Since we choose to regard the $\vA$ matrices as the independent variables, we need to determine the $\vS$ matrices from the $\vA$ matrices. The natural way to do this is to always require the $\vS$ matrices to solve the nonnegative least squares problem (\ref{nnls}). Suppose $\vA^{(0)},\dots,\vA^{(\cL)}$ are given and define $\vS^{(-1)} = \vX$.  Then we let
\begin{equation} \label{nnls}
\begin{aligned}
\vS^{(\ell)} &= \argmin_{\vS \geq 0} \|\vS^{(\ell-1)} - \vA^{(\ell)}\vS\|, \quad \ell = 0,1,2,\cdots,\cL.
\end{aligned}
\end{equation}

We define the forward-propagation function $q(\vA,\vX)$,
for any nonnegative matrices $\vX$ and $\vA$ with the same number of rows, as the solution to the nonnegative least-squares problem between $\vX$ and $\vA$; that is,
\begin{equation} \label{lsqnonneg}
q(\vA,\vX) = \argmin_{\vS \geq 0} \|\vX - \vA\vS\|.
\end{equation}
We note that this problem is ill-posed if $\vA$ does not have full column rank. In what follows, we make the assumption that every $\vA$ matrix has full column rank. This assumption is reasonable, as the $\vA$ matrices are always tall
matrices (i.e., have more rows than columns), and this condition would only be violated in pathological circumstances. Using this notation, we can define the $\vS$ matrices as
\begin{equation*}
\begin{aligned}
\vS^{(0)} &= q(\vA^{(0)},\vX), \\ \vS^{(\ell)}&=q(\vA^{(\ell)},\vS^{(\ell-1)}), \quad \ell = 1,2,\cdots, \cL.
\end{aligned}
\end{equation*}
These equations show that $\vS^{(\ell_j)}$ depends on $\vA^{(\ell_i)}$ for $\ell_i \leq \ell_j$, but not for $\ell_i > \ell_j$
and they form the forward-propogation stage of Neural NMF. In Algorithms \ref{alg:Neural NMF} and \ref{alg:Forward Prop} we display the pseudocode for our proposed method. The partial derivatives $\frac{\partial \loss}{\partial \vA^{(i)}}$ will be derived in Theorem \ref{full_backprop}.
\begin{wrapfigure}{R}{0.6\textwidth}
\begin{minipage}[b]{0.6\textwidth}
\begin{algorithm}[H]
	\caption{Forward Propagation}
	\label{alg:Forward Prop}
	\begin{algorithmic}
		\Procedure{ForwardPropagation}{$\vA^{(0)}, ..., \vA^{(\ell)}$}
		\For{$i := 0,...,\cL$}
		\State $\vS^{(i)} \leftarrow q(\vA^{(i)}, \vS^{(i-1)})$
		\EndFor
		\EndProcedure
	\end{algorithmic}
\end{algorithm}
\end{minipage}
\end{wrapfigure}

In the case that partial label information is provided for the last layer, we can perform a semi-supervised HNMF by additionally calculating the supervision matrix $\vB$
as $\vB = (\vZ \odot \vY) (\vS^{(\cL})^\dagger$ where $\vY$ and $\vZ$ are the label and indicator matrices, respectively, defined in Section
\ref{subsec:SSNMF}.
One can include a term in loss function $\loss$ encouraging matrix similarity between $\vY$ and $\vB\vS^{(\cL)}$, such as $\|\vZ \odot (\vY-\vB\vS^{(\cL)}) \|$, which will influence the learned $\vA$ and $\vS$ matrices via backpropagation.

\subsection{Backpropagation}\label{subsec:backwardpropogation}

In order to derive a backpropagation update for the matrices $\vA$, we differentiate
a cost function which depends on both the $\vA$ and $\vS$ matrices, $\loss = f(\vX,\vS^{(0)},\ldots,\vS^{(\cL)}, \vA^{(0)},\ldots,\vA^{(\cL)})$, with respect to the $\vA$ matrices. A natural choice for the cost $\loss$ is the HNMF error (\ref{eq:hNMFfroerr}), however our method can be employed for other choices of cost functions.  To differentiate, we employ the chain rule, differentiating $q(\vA,\vX)$ with respect to both $\vA$ and $\vX$.  See Figure~\ref{fig:CompGraph} for a visualization of the computational graph for the forward propagation.

\begin{figure}
\begin{scaletikzpicturetowidth}{\textwidth}
\begin{tikzpicture}[line cap=round,line join=round,>=triangle 45,x=1.0cm,y=1.0cm,scale=\tikzscale]
\clip(-1.3,7.66) rectangle (18,13.94);
\draw (-1.38,10.5) node[anchor=north west] {$\mathbf{A}^{(0)}$};
\draw (-1.23,11.32) node[anchor=north west] {$\mathbf{X}$};
\draw (0.42,11.3) node[anchor=north west] {$q(\cdotp,\cdotp)$};
\draw [->,line width=1.pt] (-0.58,11.) -- (0.36,11.);
\draw (2.94,10.5) node[anchor=north west] {$\mathbf{A}^{(1)}$};
\draw (4.79,11.32) node[anchor=north west] {$q(\cdotp,\cdotp)$};
\draw [line width=1.pt] (1.1,11.) circle (0.6512296062065976cm);
\draw (13.2,11.39) node[anchor=north west] {$\mathbf{S}^{(\cL)}$};
\draw [->,line width=1.pt] (-0.5,10.32) -- (0.38,10.78);
\draw (2.86,11.39) node[anchor=north west] {$\mathbf{S}^{(0)}$};
\draw (7.02,11.2) node[anchor=north west] {$\mathbf{\cdotp\cdotp\cdotp}$};
\draw (8.97,10.5) node[anchor=north west] {$\mathbf{A}^{(\cL)}$};
\draw (10.94,11.32) node[anchor=north west] {$q(\cdotp,\cdotp)$};
\draw (8.5,11.39) node[anchor=north west] {$\mathbf{S}^{(\cL-1)}$};
\draw [->,line width=0.6pt] (6.24,11.) -- (7.06,11.);
\draw [line width=0.6pt] (5.45,11.) circle (0.6512296062065979cm);
\draw [line width=0.6pt] (11.61,11.) circle (0.6512296062065961cm);
\draw [->,line width=0.6pt] (7.74,11.) -- (8.56,11.);
\draw [->,line width=0.6pt] (3.72,11.02) -- (4.66,11.02);
\draw [->,line width=0.6pt] (1.9,11.) -- (2.84,11.);
\draw [->,line width=0.6pt] (10.,10.32) -- (10.88,10.78);
\draw [->,line width=0.6pt] (3.8,10.32) -- (4.68,10.78);
\draw [->,line width=0.6pt] (12.42,11.) -- (13.36,11.);
\draw [->,line width=0.6pt] (9.9,11.) -- (10.84,11.);
\draw [->,line width=0.6pt] (16.82,10.98) -- (17.56,10.98);
\draw (17.62,11.25) node[anchor=north west] {$\loss$};
\draw [line width=1.pt] (15.95,11) circle (0.6512296062065961cm);
\draw (15.43,11.32) node[anchor=north west] {$f( \; \cdotp)$};
\draw [->,line width=1.pt] (14.22,11.) -- (15.16,11.);
\draw [shift={(12.66,6.82)},line width=1.pt]  plot[domain=1.1840758899039464:2.0509017598668096,variable=\t]({1.*6.4153245825147195*cos(\t r)+0.*5.8153245825147195*sin(\t r)},{0.*5.8153245825147195*cos(\t r)+1.*5.2153245825147195*sin(\t r)});
\draw [shift={(9.91,-0.6)},line width=1.pt]  plot[domain=1.2558293757493167:1.947084517671192,variable=\t]({1.*17.399801525214039*cos(\t r)+0.*12.999801525214039*sin(\t r)},{0.*13.099801525214039*cos(\t r)+1.*13.099801525214039*sin(\t r)});
\draw [shift={(8.06,-0.23)},line width=1.pt]  plot[domain=1.1138293757493167:2.121084517671192,variable=\t]({1.*16.899801525214039*cos(\t r)+0.*13.099801525214039*sin(\t r)},{0.*13.099801525214039*cos(\t r)+1.*13.699801525214039*sin(\t r)});
\draw [shift={(12.,19.)},line width=1.pt]  plot[domain=4.49134191770967:5.058356252997486,variable=\t]({1.*9.121951545584968*cos(\t r)+0.*9.121951545584968*sin(\t r)},{0.*9.121951545584968*cos(\t r)+1.*9.121951545584968*sin(\t r)});
\draw [shift={(9.3,23.48)},line width=1.pt]  plot[domain=4.318438461516672:5.1277981782075575,variable=\t]({1.*14.641543634466961*cos(\t r)+0.*14.641543634466961*sin(\t r)},{0.*14.641543634466961*cos(\t r)+1.*14.641543634466961*sin(\t r)});
\draw [shift={(7.5,25.2)},line width=1.pt]  plot[domain=4.233693207929225:5.209011803811411,variable=\t]({1.*17.237563632949986*cos(\t r)+0.*17.237563632949986*sin(\t r)},{0.*17.237563632949986*cos(\t r)+1.*17.237563632949986*sin(\t r)});
\draw [->,line width=0.6pt] (15.08,11.62) -- (15.34,11.48);
\draw [->,line width=0.6pt] (15.208805189558234,10.083701211459573) -- (15.58,10.26);
\draw [->,line width=0.6pt] (15.649500719550645,10.01055504562318) -- (15.94,10.2);
\draw [->,line width=0.6pt] (15.1,11.92) -- (15.4,11.75);
\draw [->,line width=0.6pt] (15.4,12.12) -- (15.7,11.95);
\draw [->,line width=0.6pt] (14.949864727097806,10.368181067015819) -- (15.32,10.52);
\end{tikzpicture}
\end{scaletikzpicturetowidth}
    \caption{Computational graph of the forward propagation step of the algorithm for a cost function $\loss = f(\vX, \vS^{(0)},\ldots,\vS^{(\cL)}, \vA^{(0)},\ldots,\vA^{(\cL)})$. We use this graph to guide our back propagation step by tracing back all paths from each variable to the cost, $\loss$. }
    \label{fig:CompGraph}
\end{figure}

We begin by computing formulas for the derivatives of $q(\vA,\vX)$ with respect to $\vA$ and $\vX$.
Lemma~\ref{q_acts_columnwise} (presented in the appendix), shows that $\frac{\partial q}{\partial X}(\vA,\vX)$
can be computed columnwise (with respect to columns of $\vX$), so we need only determine derivative formulas for $q(\vA,\vx)$, where $\vx$ is a column of matrix $\vX$.  We define a space in which differentiation of $q(\vA, \vX)$ is relatively simple below.

\begin{definition}\label{def:Urs}
Let $U_{r,s} \subset \mathbb{R}_+^{r \times s} \times \mathbb{R}_+^{r}$ ($r \geq s$) be the set of all matrix-vector pairs
$(\vA,\vx)$ such that $\vA$ has full (column) rank and the support of $q$ does not change in some neighborhood of $(\vA,\vx)$.
\end{definition}

\begin{lemma} \label{measure_zero}
The set defined in Definition \ref{def:Urs}, $U_{r,s}$ is dense, open, and has full (Lebesgue) measure.
\end{lemma}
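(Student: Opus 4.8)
The plan is to reduce the statement to two structural facts about the nonnegative least-squares map $q$: that its solution is unique and stable under perturbation whenever $\vA$ has full column rank, and that the degenerate configurations form the zero set of finitely many nontrivial functions. First I would dispose of the rank condition: the set of $(\vA,\vx)$ for which $\vA$ fails to have full column rank is cut out by $\det(\vA^\top\vA)=0$, a polynomial that is not identically zero (full-rank tall matrices exist), hence a closed set of Lebesgue measure zero whose complement is relatively open. On this full-rank region $\vA^\top\vA$ is positive definite, so $q(\vA,\vx)$ is the unique minimizer of a strictly convex program and is characterized by the KKT conditions. Writing $\vs_\star=q(\vA,\vx)$ and $T=\supp(\vs_\star)$, these read: $\vs_\star$ restricted to $T$ equals $(\vA_{:,T})^\dagger\vx$, $\vs_\star$ vanishes off $T$, $(\vs_\star)_T>0$, and $\vA_{:,j}^\top(\vI-P_T)\vx\le 0$ for every $j\notin T$, where $P_T$ is the orthogonal projection onto $\range(\vA_{:,T})$.

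Next I would identify $U_{r,s}$, on the full-rank region, with the locus of strict complementarity. The key implication is that if, for $T=\supp(\vs_\star)$, one has $(\vA_{:,T})^\dagger\vx>0$ and $\vA_{:,j}^\top(\vI-P_T)\vx<0$ strictly for all $j\notin T$, then $\supp(q)$ is locally constant. I would prove this by a construct-and-verify argument: since $(\vA,\vx)\mapsto(\vA_{:,T})^\dagger\vx$ and $(\vA,\vx)\mapsto(\vI-P_T)\vx$ are continuous on the full-rank region, for $(\vA',\vx')$ near $(\vA,\vx)$ the candidate vector supported on $T$ with values $(\vA'_{:,T})^\dagger\vx'$ still satisfies every KKT relation strictly; by uniqueness it must equal $q(\vA',\vx')$, so the support stays $T$ on a whole neighborhood. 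This shows the strict-complementarity set lies in $U_{r,s}$, and its contrapositive shows that any full-rank pair whose support is not locally constant must have some KKT inequality tight, i.e. $\vA_{:,j}^\top(\vI-P_T)\vx=0$ for some $j\notin T$ (the primal equalities $((\vA_{:,T})^\dagger\vx)_i=0$ cannot occur for $i\in T$).

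Openness of $U_{r,s}$ then follows directly from Definition~\ref{def:Urs}: if $\supp(q)$ is constant on an open neighborhood $N$ contained in the full-rank region, then $N$ itself witnesses the defining property at each of its points, so $N\subseteq U_{r,s}$. For full measure, the preceding paragraph shows the complement of $U_{r,s}$ within the full-rank region is contained in the finite union over index sets $T$ and indices $j\notin T$ of the sets $E_{T,j}=\{(\vA,\vx):\vA_{:,j}^\top(\vI-P_T)\vx=0\}$. Fixing a full-rank $\vA$, the map $\vx\mapsto\vA_{:,j}^\top(\vI-P_T)\vx=\bigl((\vI-P_T)\vA_{:,j}\bigr)^\top\vx$ is a linear functional whose coefficient $(\vI-P_T)\vA_{:,j}$ is nonzero, because full column rank forces $\vA_{:,j}\notin\range(\vA_{:,T})$ when $j\notin T$; hence each $\vx$-slice of $E_{T,j}$ lies in a hyperplane and is null, and Fubini makes $E_{T,j}$ null. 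The finite union of these null sets together with the rank-deficient locus is null, so $U_{r,s}$ has full measure; density is then automatic, since $U_{r,s}$, being of full measure, meets every nonempty relatively open subset of the ambient orthant (each of which has positive Lebesgue measure).

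The step I expect to be the main obstacle is the construct-and-verify argument that strict complementarity implies local constancy of the support: one must argue that the vector assembled from the fixed active set $T$ is genuinely the global nonnegative least-squares optimum at every nearby $(\vA',\vx')$, which hinges on combining uniqueness from strict convexity with the uniform preservation of the strict KKT inequalities under small perturbations. Once this stability of the active set is in place, the measure-theoretic and topological conclusions are routine.
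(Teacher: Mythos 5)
Your proof is correct, but it takes a genuinely different route from the paper's. The paper first proves that $q$ is continuous on the full-column-rank region (Lemma~\ref{q_cts}, itself a page-long argument using a compactness bound on $\|\vA_{:,T}^\dagger\|$ and an appeal to Rockafellar), and then studies a sequence $(\vA^k,\vx^k)\to(\vA,\vx)$ whose (constant) support $T_1$ differs from $T_0=\supp q(\vA,\vx)$: continuity forces $T_0\subset T_1$, and any index $i\in T_1\setminus T_0$ yields the \emph{primal} degeneracy $\bigl(\vA_{:,T_1}^\dagger\vx\bigr)_{T_1(i)}=0$; the exceptional set is thus contained in $\bigcup_T f_T^{-1}(Z_{|T|})$ with $f_T(\vA,\vx)=\vA_{:,T}^\dagger\vx$ and $Z_{|T|}$ the vectors with a zero entry, and nullity of these preimages is deduced from a theorem of Ponomarev on maps with full row-rank derivative. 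You instead characterize $q$ by the KKT system and show that strict complementarity is an open condition: the candidate supported on $T$ with values $(\vA'_{:,T})^\dagger\vx'$ satisfies every KKT relation strictly at all nearby $(\vA',\vx')$, and KKT sufficiency for the convex program plus uniqueness (strict convexity, from full column rank) force it to equal $q(\vA',\vx')$ — this construct-and-verify step correctly substitutes for the paper's continuity lemma, which you never need. Consequently your exceptional set is captured by \emph{dual} degeneracy, $\vA_{:,j}^\top(\vI-\vA_{:,T}\vA_{:,T}^\dagger)\vx=0$ for some $j\notin T$, and you kill it with an elementary Fubini/hyperplane-slice argument, using that $(\vI-\vA_{:,T}\vA_{:,T}^\dagger)\vA_{:,j}\neq 0$ when $\vA$ has full column rank; openness and density are handled identically in both proofs. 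What your route buys: it is self-contained (no continuity lemma, no Ponomarev theorem), and it identifies $U_{r,s}$ with the standard strict-complementarity/active-set-stability picture from quadratic programming. What the paper's route buys: global continuity of $q$ on the full-rank region as a reusable standalone fact, and an exceptional-set description involving only the primal solution, with no need to introduce the dual constraints. The one point you should make fully explicit is the step you yourself flag: the persistence of the strict inequalities relies on continuity of pseudoinversion and of the orthogonal projection on the open set of full-column-rank matrices, which is exactly where the full-rank hypothesis of Definition~\ref{def:Urs} enters.
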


Roughly, this lemma says that most pairs $\vA$ and $\vx$ encountered in the course of our method will be such that the support of the computed column of $S$ will not change; that is, we can use supports learned during forward propagation for backpropagation gradients.
We delay the somewhat technical proof of this lemma to Appendix \ref{proofs}. Now we utilize Lemma~\ref{formula_for_q} to differentiate $q(\vA,\vx)$.

\begin{theorem} \label{derivs_of_q}
Let $U_{r,s}$ be the set defined in Definition \ref{def:Urs}, and suppose $(\vA,\vx) \in U_{r,s}$. Let $T = \supp q(\vA,\vx)$. The partial derivative of $q$ with respect to $\vx$ is given entry-wise as
\begin{equation} \label{X_deriv_of_q}
\left(\frac{\partial q}{\partial \vx}(\vA,\vx)\right)_{T,:} = \vA_{:,T}^\dagger , \quad\text{  and  }\quad \left(\frac{\partial q}{\partial \vx}(\vA,\vx)\right)_{T^c,:} = 0.
\end{equation}
The partial derivative of $q$ with respect to row $i$ of $\vA$ for $i \in [r]$ is given entry-wise as
\begin{equation}\label{A_deriv_of_q}
\left(\frac{\partial q}{\partial \vA_{i,:}}(\vA,\vx)\right)_{T,T} = -\left(\vA_{:,T}^\dagger \right)_{:,i}\left(\vA_{:,T}^\dagger  \vx\right)^\top + \left(\left(\vI-\vA_{:,T}\vA_{:,T}^\dagger \right)\vx\right)_{i}\vA_{:,T}^\dagger \left(\vA_{:,T}^\dagger \right)^\top, \;\text{  and  }\; \left(\frac{\partial q}{\partial \vA_{i,:}}(\vA,\vx)\right)_{(T \times T)^c} = 0.
\end{equation}
\end{theorem}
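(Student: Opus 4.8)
The plan is to reduce the constrained problem to an unconstrained one on the active support and then differentiate the resulting closed form. Because $(\vA,\vx)\in U_{r,s}$, the support $T=\supp q(\vA,\vx)$ is locally constant and $\vA_{:,T}$ inherits full column rank from $\vA$; consequently the nonnegativity constraints on the coordinates in $T$ are inactive throughout a neighborhood, while the coordinates in $T^c$ stay pinned at zero. Thus, as recorded in Lemma~\ref{formula_for_q}, on that neighborhood the minimizer coincides with the ordinary least-squares solution on the active columns, $q(\vA,\vx)_T=\vA_{:,T}^\dagger\vx$ and $q(\vA,\vx)_{T^c}=0$. Every subsequent derivative is then just a derivative of this smooth local formula.

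Differentiating with respect to $\vx$ is immediate: the map $\vx\mapsto\vA_{:,T}^\dagger\vx$ is linear with constant matrix $\vA_{:,T}^\dagger$, so $(\partial q/\partial\vx)_{T,:}=\vA_{:,T}^\dagger$, and since $q(\vA,\vx)_{T^c}\equiv 0$ independently of $\vx$ we obtain $(\partial q/\partial\vx)_{T^c,:}=0$. This establishes \eqref{X_deriv_of_q}.

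For \eqref{A_deriv_of_q} I would write $\vM:=\vA_{:,T}$ and differentiate $q(\vA,\vx)_T=\vM^\dagger\vx$ with respect to the entry $\vM_{i,j}$ (the $j$-th active entry of row $i$ of $\vA$). Here I invoke the standard differentiation rule for the Moore--Penrose pseudoinverse, which for a full-column-rank matrix simplifies, since $\vM^\dagger\vM=\vI$ annihilates one of its three terms, to $\partial\vM^\dagger/\partial\vM_{i,j}=-\vM^\dagger(\ve_i\ve_j^\top)\vM^\dagger+\vM^\dagger(\vM^\dagger)^\top(\ve_j\ve_i^\top)(\vI-\vM\vM^\dagger)$, where $\ve_i,\ve_j$ are standard basis vectors. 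Multiplying on the right by $\vx$ and collapsing the scalar inner products $\ve_j^\top\vM^\dagger\vx=(\vM^\dagger\vx)_j$ and $\ve_i^\top(\vI-\vM\vM^\dagger)\vx=((\vI-\vM\vM^\dagger)\vx)_i$ yields, for the $j$-th column of the derivative block, $-(\vM^\dagger)_{:,i}(\vM^\dagger\vx)_j+((\vI-\vM\vM^\dagger)\vx)_i\,(\vM^\dagger(\vM^\dagger)^\top)_{:,j}$. Assembling these columns over $j\in T$ recovers exactly the rank-one term $-(\vA_{:,T}^\dagger)_{:,i}(\vA_{:,T}^\dagger\vx)^\top$ plus the scaled Gram term $((\vI-\vA_{:,T}\vA_{:,T}^\dagger)\vx)_i\,\vA_{:,T}^\dagger(\vA_{:,T}^\dagger)^\top$. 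The vanishing of the $(T\times T)^c$ block is then read off from the local formula: rows outside $T$ are identically zero, and columns of $\vA$ outside $T$ never enter $\vM$, so they cannot affect $q(\vA,\vx)_T$ to first order.

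The main obstacle is the bookkeeping in the $\vA$-derivative: I must keep the fixed differentiation row $i$ and the assembly (column) index $j$ straight, and verify that the two generally non-symmetric pieces produced by the pseudoinverse rule collapse precisely into a rank-one outer product and a scalar multiple of $\vA_{:,T}^\dagger(\vA_{:,T}^\dagger)^\top$. A secondary point worth stating carefully is the justification for differentiating the local least-squares formula at all, namely that membership in $U_{r,s}$ (via Lemma~\ref{measure_zero} and Definition~\ref{def:Urs}) guarantees the support is constant on a neighborhood, so $q$ agrees with a smooth function there and the classical derivatives apply; without this, $q$ is only piecewise smooth and the formulas hold only away from the measure-zero boundary where the support changes.
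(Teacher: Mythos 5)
Your proposal is correct and follows essentially the same route as the paper's proof: both reduce to the locally valid closed form $q_T = \vA_{:,T}^\dagger\vx$, $q_{T^c}=0$ (justified by Lemma~\ref{formula_for_q} together with the local constancy of the support from Definition~\ref{def:Urs}), read off the $\vx$-derivative from linearity, and obtain the $\vA$-derivative from the Golub--Pereyra pseudoinverse differentiation rule, dropping the third term because $\vA_{:,T}$ has full column rank. Your column-by-column assembly over $j\in T$ and the vanishing argument for the $(T\times T)^c$ block match the paper's entry-wise computation exactly.
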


These formulas for the derivatives of $q$ are sufficient for implementing a backwards propagation algorithm in a machine learning library, such as PyTorch or TensorFlow. We summarize in Theorem~\ref{full_backprop} how these partial derivatives are applied via the chain rule for an arbitrary differentiable cost function $\loss$.  Assume there are given values for the variables $\vA^{(0)}, \cdots, \vA^{(\cL)}$ and the dependent (matrix) variables $\vS^{(\ell)}$ are defined as functions of the independent (matrix) variables $\vA^{(\ell)}$ recursively as
\begin{equation} \label{S_def}
\vS^{(0)} = q(\vA^{(0)},\vX), \quad\text{ and }\quad \vS^{(\ell)} = q(\vA^{(\ell)},\vS^{(\ell-1)}),
\end{equation} for $\ell \in [\mathcal{L}]$, where $\mathcal{L}$ is the number of layers.

This calculation requires one to collect derivatives along all paths between $\loss$ and the variable of interest in the computational graph in Figure~\ref{fig:CompGraph}.  We achieve this via the use of several auxiliary variables which collect partial derivatives along different segments of the paths.  First, the variable $\Phi$ collects (column-wise) derivatives along the central path between $\vS^{(\ell_2)}$ and $\vS^{(\ell_1)}$.

\begin{definition}\label{def:phi}
Let $T_m^{(\ell)} = \supp \vS^{(\ell)}_{:,m}$ where $m \in [k^{(\ell-1)}]$ is a column index of $\vS^{(\ell)}$ which has $k^{(\ell-1)}$ columns.
Define $\vPhi^{(\ell_1,\ell_2),m}$ for $\cL \geq \ell_2 \geq \ell_1 \geq 0$ by
$$\vPhi^{(\ell_1,\ell_2),m} = \left({\vA^{(\ell_2)}_{:,T_m^{(\ell_2)}}}^{\dagger}\right)_{:,T_m^{(\ell_2-1)}}\left({\vA^{(\ell_2-1)}_{:,T_m^{(\ell_2-1)}}}^{\dagger}\right)_{:,T_m^{(\ell_2-2)}}\dots \left({\vA^{(\ell_1+1)}_{:,T_m^{(\ell_1+1)}}}^{\dagger}\right)_{:,T_m^{(\ell_1)}}\left({\vA^{(\ell_1)}_{:,T_m^{(\ell_1)}}}^{\dagger}\right).$$
\end{definition}

The variable $\mathbf{d}$ utilizes $\Phi$ to collect (column-wise) derivatives along the path that follows the edge from $\loss$ back to $\vS^{(\ell_2)}$, and along the central path between $\vS^{(\ell_2)}$ to $\vS^{(\ell_1)}$.  Finally, the variable $\mathbf{U}$ adds to this path the edge between $\vS^{(\ell_1)}$ and $\vA^{(\ell_1)}$, which is the variable of interest.

\newcommand\holdconstant{\textbf{*}}

\begin{definition}
Suppose $\loss$ is a cost function depending on all the variables $\vS^{(\ell)}$ and $\vA^{(\ell)}$. Let $\left(\frac{\partial \loss}{\partial \vS^{(\ell_2)}} \right)^\holdconstant$ denote the derivative of $\loss$ with respect to $\vS^{(\ell_2)}$ holding $\vS^{(\ell_2+1)}$, $\ldots$ , $\vS^{(\cL)}$ constant.  We define
$$\vd^{(\ell_1,\ell_2),m} = \left({\vPhi^{(\ell_1,\ell_2),m}}\right)^\top \left(\frac{\partial \loss}{\partial \vS^{(\ell_2)}}\right)^\holdconstant_{T_m^{(\ell_2)},m},$$
and $\vU^{(\ell_1,\ell_2),m}$ with
$$\vU^{(\ell_1,\ell_2),m}_{:,T_m^{(\ell_1)}} = - \vd^{(\ell_1,\ell_2),m}\left({\vS^{(\ell_1)}_{T_m^{(\ell_1)},m}}\right)^\top + \left(\vS^{(\ell_1 -1)}-\vA^{(\ell_1)}\vS^{(\ell_1)}\right)_{:,m} \left(\vd^{(\ell_1,\ell_2,),m}\right)^\top \left({\vA^{(\ell_1)}_{:,T_m^{(\ell_1)}}}^\dagger\right)^\top, \text{ and } \vU^{(\ell_1,\ell_2),m}_{:,{T_m^{(\ell_1)}}^c} = 0.$$ \label{def:dandU}
\end{definition}

The final desired derivative, $\frac{\partial \loss}{\partial \vA^{(\ell_1)}}$, then sums these collected derivatives over all paths between $\loss$ and $\vA^{(\ell_1)}$, including the direct edge.

\begin{theorem} \label{full_backprop}
Let the dependent (matrix) variables $\vS^{(\ell)}$ be defined as functions of the independent (matrix) variables $\vA^{(\ell)}$ recursively as
\begin{equation}
\vS^{(0)} = q(\vA^{(0)},\vX), \quad\text{ and }\quad \vS^{(\ell)} = q(\vA^{(\ell)},\vS^{(\ell-1)}),
\end{equation} for $\ell \in [\mathcal{L}]$, where $\mathcal{L}$ is the number of layers.
Let $T_m^{(\ell)}$ and $\vPhi^{(\ell_1,\ell_2),m}$ for $\cL \geq \ell_2 \geq \ell_1 \geq 0$ be as defined in Definition~\ref{def:phi}.
Fixing a point in the space of $\vA$-matrices, suppose that for all $0 \le \ell \le \mathcal{L}$ and $m \in [k^{(\ell -1)}]$, $(\vA^{(\ell)},\vS^{(\ell-1)}_{:,m})$ is in the set $U_{k^{(\ell-1)},k^{(\ell)}}$ defined in Definition \ref{def:Urs}.
Suppose $\loss$ is a cost function depending on all the variables $\vS^{(\ell)}$ and $\vA^{(\ell)}$. Let $\left(\frac{\partial \loss}{\partial \vS^{(\ell_2)}} \right)^\holdconstant$, $\vd^{(\ell_1,\ell_2),m}$, and $\vU^{(\ell_1,\ell_2),m}$ be as defined in Definition~\ref{def:dandU}.
Then, if we let $\left(\frac{\partial \loss}{\partial \vA^{(\ell_1)}} \right)^\vS$ be the derivative of $\loss$ with respect to $\vA^{(\ell_1)}$, holding the $\vS$ matrices constant, we have
$$\frac{\partial \loss}{\partial \vA^{(\ell_1)}} = \left(\frac{\partial \loss}{\partial \vA^{(\ell_1)}} \right)^{\vS} + \sum_{\substack{\ell_1 \leq \ell_2 \leq \cL \\ 1 \leq m \leq M}} \vU^{(\ell_1,\ell_2),m}. $$
\end{theorem}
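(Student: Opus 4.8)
The plan is to prove the formula as a multivariable chain rule over the computational graph of Figure~\ref{fig:CompGraph}, organizing the total derivative into the direct (frozen-$\vS$) term plus a sum over all directed paths from $\vA^{(\ell_1)}$ to $\loss$, grouped by the \emph{last} $\vS$-node each path visits. The first reduction is columnwise: by Lemma~\ref{q_acts_columnwise}, $q$ acts independently on columns, so each column $\vS^{(\ell)}_{:,m}$ depends only on $\vA^{(\ell)}$ and on $\vS^{(\ell-1)}_{:,m}$; hence the forward Jacobian block-diagonalizes across $m$ and the final answer acquires its outer sum over $m$. The hypothesis $(\vA^{(\ell)},\vS^{(\ell-1)}_{:,m}) \in U_{k^{(\ell-1)},k^{(\ell)}}$ is exactly what licenses the calculus: by Definition~\ref{def:Urs} the supports $T_m^{(\ell)}$ are locally constant, so each $\vS^{(\ell)}_{:,m}$ is differentiable at the given point and the support-restricted formulas \eqref{X_deriv_of_q} and \eqref{A_deriv_of_q} of Theorem~\ref{derivs_of_q} apply verbatim (with $\vx = \vS^{(\ell-1)}_{:,m}$, using $\vA_{:,T}^\dagger\vx = \vS^{(\ell)}_{T_m^{(\ell)},m}$ from Lemma~\ref{formula_for_q}).

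Next I would write down the bare chain rule. Since $\vA^{(\ell_1)}$ first enters the forward pass at layer $\ell_1$, every indirect path to $\loss$ factors, for some terminal layer $\ell_2$ with $\ell_1 \le \ell_2 \le \cL$, as
$$\vA^{(\ell_1)} \;\to\; \vS^{(\ell_1)}_{:,m} \;\to\; \vS^{(\ell_1+1)}_{:,m} \;\to\; \cdots \;\to\; \vS^{(\ell_2)}_{:,m} \;\to\; \loss,$$
where the final edge is the direct dependence of $f$ on its $\vS^{(\ell_2)}$ slot. Grouping paths by $\ell_2$ gives three factors per $(\ell_2,m)$: the loss sensitivity $\left(\frac{\partial \loss}{\partial \vS^{(\ell_2)}}\right)^{\holdconstant}_{T_m^{(\ell_2)},m}$ (starred because the downstream $\vS$-nodes are reached by \emph{larger} $\ell_2$ and so are held fixed here), the composite forward Jacobian from the layer-$\ell_1$ input to the layer-$\ell_2$ output, and the $\vA$-derivative of $\vS^{(\ell_1)}_{:,m}$ given by \eqref{A_deriv_of_q}; adding the direct edge yields the $\left(\frac{\partial \loss}{\partial \vA^{(\ell_1)}}\right)^{\vS}$ term.

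The key internal step is to identify the middle factor with $\vPhi$. I would prove by induction on $\ell_2-\ell_1$ that
$$\frac{\partial \vS^{(\ell_2)}_{T_m^{(\ell_2)},m}}{\partial \vS^{(\ell_1-1)}_{:,m}} = \vPhi^{(\ell_1,\ell_2),m},$$
the support-restricted Jacobian of the composite map. Each per-layer factor is $\bigl({\vA^{(\ell)}_{:,T_m^{(\ell)}}}^{\dagger}\bigr)$ on the rows in $T_m^{(\ell)}$ and zero elsewhere by \eqref{X_deriv_of_q}; the only subtlety is support matching, namely that passing from layer $\ell$ to $\ell+1$ one applies ${\vA^{(\ell+1)}_{:,T_m^{(\ell+1)}}}^{\dagger}$ only on the columns indexed by $T_m^{(\ell)}$ (because $\vS^{(\ell)}_{:,m}$ varies only there), which is precisely the restriction $(\,\cdot\,)_{:,T_m^{(\ell)}}$ nested in Definition~\ref{def:phi}. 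Contracting the starred loss derivative against this Jacobian and transposing gives $\vd^{(\ell_1,\ell_2),m}$ of Definition~\ref{def:dandU}; peeling off the innermost $\ell_1$ factor exhibits $\vd$ as $\bigl({\vA^{(\ell_1)}_{:,T_m^{(\ell_1)}}}^{\dagger}\bigr)^{\!\top}$ applied to the layer-$\ell_1$ \emph{output} gradient $\vg$ (the part routed through $\ell_2$), consistent with its role below.

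Finally I would substitute \eqref{A_deriv_of_q} for the last factor and contract it against $\vg$. The two additive pieces of \eqref{A_deriv_of_q} map term-for-term onto the two summands of $\vU^{(\ell_1,\ell_2),m}$: contracting $\vg$ with $-({\vA_{:,T}^\dagger})_{:,i}(\vA_{:,T}^\dagger\vx)^\top$ produces $-\vd^{(\ell_1,\ell_2),m}\bigl(\vS^{(\ell_1)}_{T_m^{(\ell_1)},m}\bigr)^{\!\top}$, while contracting it with the residual piece $\bigl((\vI-\vA_{:,T}\vA_{:,T}^\dagger)\vx\bigr)_i\,\vA_{:,T}^\dagger(\vA_{:,T}^\dagger)^\top$ produces $\bigl(\vS^{(\ell_1-1)}-\vA^{(\ell_1)}\vS^{(\ell_1)}\bigr)_{:,m}\bigl(\vd^{(\ell_1,\ell_2),m}\bigr)^{\!\top}\bigl({\vA^{(\ell_1)}_{:,T_m^{(\ell_1)}}}^{\dagger}\bigr)^{\!\top}$, using $\vx-\vA_{:,T}\vA_{:,T}^\dagger\vx = \bigl(\vS^{(\ell_1-1)}-\vA^{(\ell_1)}\vS^{(\ell_1)}\bigr)_{:,m}$ and $\vA_{:,T}^\dagger(\vA_{:,T}^\dagger)^\top\vg = \vA_{:,T}^\dagger\vd$. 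Summing over $\ell_2$ and $m$ and reinstating the direct term gives the claimed identity. I expect the main obstacle to be precisely this index bookkeeping—keeping the row/column support restrictions and the transposes aligned in the telescoping of $\vPhi$ and in the two-term matching against \eqref{A_deriv_of_q}—whereas the conceptual core (a chain rule over the DAG, valid because the $U_{r,s}$ hypothesis freezes the supports) is routine.
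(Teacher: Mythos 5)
Your proposal is correct and follows essentially the same route as the paper's proof: the chain rule over the computational graph grouped by the terminal layer $\ell_2$, the columnwise reduction via Lemma~\ref{q_acts_columnwise}, the identification of the composite Jacobian with $\vPhi^{(\ell_1,\ell_2),m}$ (which you establish by induction where the paper telescopes the product directly via the support fact), and the term-by-term matching of the $\vA$-derivative formula \eqref{A_deriv_of_q} against the two summands of $\vU^{(\ell_1,\ell_2),m}$ using the residual identity and ${\vA^{(\ell_1)}_{:,T_m^{(\ell_1)}}}^\dagger \vS^{(\ell_1-1)}_{:,m} = \vS^{(\ell_1)}_{T_m^{(\ell_1)},m}$. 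Your intermediate gradient $\vg$ and the paper's recursion $\vPhi^{(\ell_1+1,\ell_2),m}_{:,T_m^{(\ell_1)}}{\vA^{(\ell_1)}_{:,T_m^{(\ell_1)}}}^\dagger = \vPhi^{(\ell_1,\ell_2),m}$ are the same bookkeeping expressed in different order, so the two arguments coincide in substance.
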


That is,
if the matrices $\vA$ and $\vS$ are such that $(\vA^{(\ell)},\vS^{(\ell-1)}_{:,m}) \in U_{k^{(\ell-1)},k^{(\ell)}}$ of Definition~\ref{def:Urs}
then we can compute the derivative of differentiable cost function $\loss$ with respect to the independent variables $\vA$ using simple matrix operations and the support information calculated during forward propagation.

\section{Experimental Results}\label{sec:experimentalresults}
We test Neural NMF on three datasets: a small, synthetic dataset, the 20 Newsgroups dataset, and the MyLymeData dataset. The synthetic dataset is a small block matrix, with three different levels of hierarchy in the blocks.
The 20 Newsgroups dataset is a common benchmark dataset in which hierarchy between the topics of the dataset is known.  Meanwhile, the MyLymeData dataset represents the symptoms experienced by a group of Lyme disease patients, and hierarchy in this dataset is not known a priori. Our implementation is available in the indexed Python package \texttt{NeuralNMF} and the code for experiments is provided on \texttt{Github}~\cite{NeuralNMFpackage}.

On the synthetic dataset, we compare Neural NMF to HNMF and Deep NMF in the unsupervised, semi-supervised, and fully supervised settings with 1, 2, and 3 layers, and report classification accuracy and reconstruction loss. We see that in each of these settings, Neural NMF outperforms HNMF and Deep NMF in both classification and reconstruction, and forms a better low rank representation of the data set that appears to preserve more of the coarser block structure.

On the 20 Newsgroups dataset, we compare Neural NMF to HNMF in the unsupervised and semi-supervised settings for 2 layer experiments, and report classification accuracy at each layer. We focus on classification and qualitative analysis rather than reconstruction because on this data set, one cannot ask to produce a highly accurate low-rank reconstruction of the data but instead seek to form meaningful and class-discriminatory topics that form a hierarchical structure. Our experimental results show that for both the unsupervised and supervised settings, Neural NMF attains a higher classification accuracy than HNMF at each layer, and the topics formed by Neural NMF have significantly better interpretability and hierarchical structure. We also see that despite only being provided partial label information at the second layer, Neural NMF is able to improve the classification accuracy at the first layer with the additional of this label information, demonstrating that unlike HNMF, Neural NMF is able to propagate information provided at the last layer to earlier layers.

On the MyLymeData set, we see the potential for Neural NMF to produce hierarchical topic structure on a real-world large-scale survey dataset.  While the ground truth hierarchical structure is unknown for this real, messy dataset, we note that the results produced by Neural NMF yield interpretable results that reflect both what is well-known and unknown about Lyme disease patients and the manifestation of their symptoms.

\subsection{Synthetic Data}
We first test the reconstruction and classification ability of Neural NMF in an idealized setting: on a $90 \times 87$  noisy toy dataset with a clear three-layer hierarchical structure.  Starting with two large blocks, we overlay increasingly smaller and more intense asymmetric regions along the diagonal of a matrix, and finally add a $\text{uniform}(0,1)$ noise to the entire matrix; see the left plot of Figure \ref{fig:unsupervised two layer results}. We know the optimal model rank sequence to be $k^{(0)} = 9, k^{(1)} = 4, k^{(2)} = 2$ \textit{a priori}. We test HNMF, Deep NMF, and Neural NMF with one, two, or three layer structure, and various levels of supervision. The labels representing to which of the nine classes the data points belong (grouped by the highest intensity blocks) are given for 40\% of the data (semisupervised) or 100\% of the data (supervised). For each level of supervision and depth, the results are averaged over 25 trials.   We present the recovery error and classification accuracy measuring the discrepancy between $\vY$ and the computed matrices $\vB$ and $\vS$ for these experiments in Table \ref{tab:comparison of methods synthetic data}.  The reconstruction error is computed relative to the norm of the original matrix as $\|\vX - \vA^{(0)}\vA^{(1)} \cdots \vA^{(\cL)}\vS^{(\cL)}\| / \|\vX\|$.  Object $m$ is predicted to have label $p$ if $(\vB\vS^{(\cL)})_{pm} = \max$ $(\vB\vS^{(\cL)})_{:m}$, and classification accuracy is the proportion of predicted labels that match the true labels.   Noteworthy improvements of Neural NMF over HNMF and Deep NMF are bolded.  We comment that we expect the advantage Neural NMF enjoys over HNMF and Deep NMF is due to the backpropagation method allowing it to avoid suboptimal local minima found by HNMF and Deep NMF.  Additionally, in this example the classification accuracy is inherently tied to the reconstruction error (as the labels are generated by entries of the matrix), so Neural NMF is able to achieve good accuracy even in the unsupervised setting.

While the approximations produced at the $\ell$th layer have rank $k^{(\ell)}$, the rank of the final approximation produced by the hierarchical model will be $k^{(\mathcal{L})}$.  For this reason, we can only seek the $k^{(\mathcal{L})}$ most representative features (blocks) when qualitatively evaluating the final approximations produced by hierarchical models.

In Figure \ref{fig:unsupervised two layer results}, we visualize the reconstructions produced by each method with no supervision and two layer structure ($k^{(0)} = 9$ and $k^{(1)} = 4$). We cannot hope to resolve the highest-intensity features from the original data in our reconstructions, as the NMF approximations have lower than necessary rank.
Instead, we consider how accurately the methods reconstruct the two-layer block structure.  Although each method is able to capture some of the structure, it is clear that Neural NMF outperforms HNMF and Deep NMF, resolving sharper blocks.

In Figure \ref{fig: semisupervised three layer}, we see that Neural NMF similarly outperforms HNMF and Deep NMF in a semisupervised three-layer trial ($k^{(0)} = 9, k^{(1)} = 4, k^{(2)} = 2$); note that HNMF and Deep NMF produce many columns with extemely low intensity (HNMF entirely misses the second of the two coarsest blocks), while Neural NMF produces a rank-2 approximation which correctly
reconstructs the coarsest two-block structure of least intensity (lightest blue). We note that adding label information is expected to lead to worse reconstruction because the optimization task will focus on improving classification, often at the detriment of higher reconstruction loss.

\begin{table}
	\centering
	\caption{Reconstruction error / classification accuracy for various supervision levels and layer structures on the synthetic dataset. Entry marked with $*$ corresponds to the experiment represented in Figure \ref{fig:unsupervised two layer results}, and entry marked with $**$ corresponds to the experiment represented in Figure \ref{fig: semisupervised three layer}.}

	\label{tab:comparison of methods synthetic data}
	\begin{tabular}{| c | c | c | c | c |}
		\hline
		 & \hspace{-0.1cm}Layers\hspace{-0.1cm} & Hier. NMF & Deep NMF & Neural NMF \\
		\hline
		 \multirow{3}{*}{\hspace{-0.1cm}Unsuper.\hspace{-0.1cm}} & 1 & 0.053 / 0.111	   & 0.031 / 0.111		& 0.029 / \textbf{1} \\
		 & 2 & 0.399 / 0.222		  &  0.414 / 0.222       & \textbf{0.310} / \textbf{0.995} $^*$ \\
		 & 3 & 0.860 / 0.356 & 0.838 / 0.356	& \textbf{0.492} / \textbf{1}  \\

		\hline
		 \multirow{3} {*} {\hspace{-0.1cm}Semisuper.\hspace{-0.1cm}} & 1 & 0.049 / 0.933   & 0.031 / 0.947 & 0.042 / \textbf{1}  \\
		 & 2 & 0.374 / 0.926  & 0.394 / 0.911 & \textbf{0.305} / \textbf{1} \\
		 & 3 & 0.676 / 0.930 & 0.733 / 0.930 & \textbf{0.496} / \textbf{0.990} $^{**}$\\

		\hline
		 \multirow{3} {*} {\hspace{-0.1cm}Supervised\hspace{-0.1cm}} & 1  &  0.052 /  0.960  & 0.042 / 0.962 & 0.042 / \textbf{1} \\
		 & 2 & 0.311  / 0.984 & 0.310 / 0.984 & 0.307 / 1 \\
		 & 3 & 0.495 / 1 & 0.494 / 1& 0.498 / 1 \\
		\hline
	\end{tabular}
\end{table}

\begin{figure}
	\centering
	\includegraphics[width=0.8\textwidth]{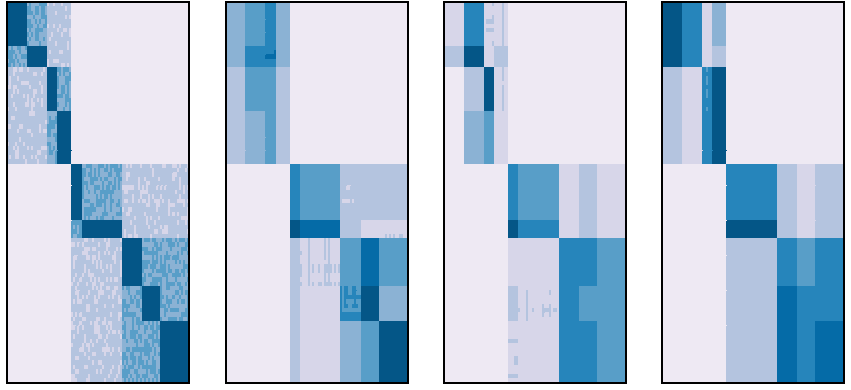}
	\caption{Rank 4 approximations of the original dataset when no label information is provided and a two-layer structure $k^{(0)} = 9, k^{(1)} = 4$ is specified, constructed by $\vX \approx \vA^{(0)}\vA^{(1)}\vS^{(1)}$.  Left to right: original data, HNMF approximation, Deep NMF approximation, and Neural NMF approximation.}
	\label{fig:unsupervised two layer results}
\end{figure}

\begin{figure}
	\centering
	\includegraphics[width=0.8\textwidth]{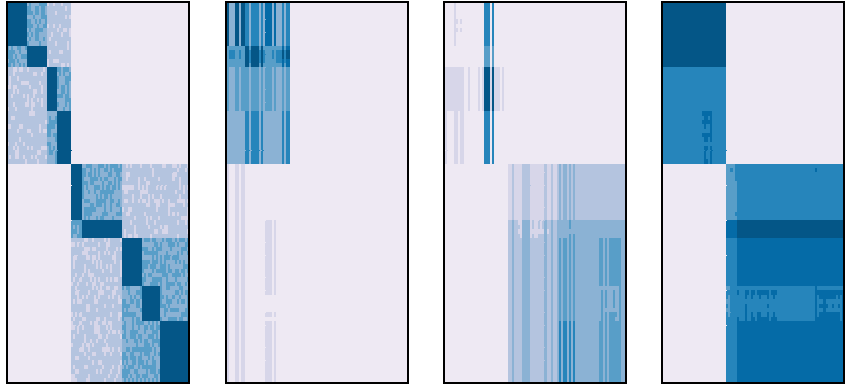}
	\caption{Rank 2 approximations of the original dataset when 40\% of the label information is specified as known and a three-layer structure $k^{(0)} = 9, k^{(1)} = 4, k^{(2)}=2$ is specified, constructed by $\vX \approx \vA^{(0)}\vA^{(1)}\vA^{(2)}\vS^{(2)}$.  Left to right: original data, HNMF approximation, Deep NMF approximation, and Neural NMF approximation.}
	\label{fig: semisupervised three layer}
\end{figure}

\subsection{20 Newsgroups Data}

The 20 Newsgroups dataset is a collection of approximately 20,000 text documents containing the text of messages from 20 different newsgroups on the distributed discussion system Usenet which functioned similarly to current internet discussion forums.  The documents are partitioned nearly evenly across the 20 newsgroups which can be further classified into six supergroups (computers, for sale, sports/recreation, politics, science, religion) \cite{KL08}.  This clear hierarchical topic structure makes this an appropriate testing ground for Neural NMF.

In our experiments, we subsample 100 documents from each of 10 subtopics from the 20 Newsgroups dataset. We encode the text data in a word frequency vector representation of the bag-of-words model. We perform experiments for both the semi-supervised and unsupervised tasks.  For semi-supervision, we provide labels for 75\% of the documents and compute classification accuracy on the 25\% of documents without provided labels. We present the keywords from each experiment (those words which are represented with largest magnitude in each topic) and the classification accuracy measuring the discrepancy between $\vY$ and the computed matrices $\vB$ and $\vS$. The reconstruction error is computed as $\|\vX - \vA^{(0)}\vA^{(1)} \cdots \vA^{(\cL)}\vS^{(\cL)}\| / \|\vX\|$.  Object $m$ is predicted to have label $p$ if $(\vB\vS^{(\cL)})_{pm} = \max$ $(\vB\vS^{(\cL)})_{:m}$, and classification accuracy is the proportion of predicted labels that match the true labels.

In Table \ref{tab:20newssub}, we display the classifications accuracies for Neural NMF, Deep NMF, and HNMF for the first and second layers of an unsupervised and semi-supervised 2 layer experiment, where supervision labels are provided only for the last layer. Each experiments was run for ten trials and we report the average of the trials. We see that Neural NMF outperforms HMF
in all setting, and outerperforms Deep NMF when supervision information is provided. Comparing the unsupervised and supervised experiments, we see that even though label information is provided only at the second layer, the first layer of Neural NMF gains substantial improvement when supervision is added, suggesting that the classification information at the second layer successfully propagated to the first layer. This is not possible for HNMF, where each layer's factorization is computed separately.  We also see that for Deep NMF the first layer does not have significant improvement when supervision information is added.

 In Tables \ref{tab:keywords_10} and \ref{tab:keywords_6}, we display the keywords learned by Neural NMF at the first and second layers, respectively, for the semi-supervised two layer experiment. We see that the words are meaningful and representative of the topics within the 20 Newsgroups dataset, and we note that at rank 6, each of the six topics related directly to one of the six super topics of this dataset. We are also able to see clear hierarchical structure, such as a medical topic and space topic at the first layer (topics 5 and 7 in Table \ref{tab:keywords_10}) that combine into a science topic in the second layer (topic 4 in Table \ref{tab:keywords_6}). This hierarchical relationship is also evident from Figure \ref{fig:a2_matrix}, where we display a heat map of the $A_2$ matrix for the 2 layer semi-supervised Neural NMF experiment. We see a clear relationship between topics at rank 10 and rank 6, which agrees with the known hierarchy. The topic labels at rank 10 were chosen qualitatively based on the keywords seen in Tables \ref{tab:keywords_10}, and the topic labels at rank 6 were determined by the presence of 20 Newsgroups data set classes in the rows of the $\vS^{(1)}$ matrix.\footnote{At first glance, the keywords in Topic 6 of layer 1 do not appear to correspond to any 20 Newsgroups document topic, but all 10 keywords come from the email signature of a prolific user within medicine newsgroups with the name Gordon Banks, who includes his radio call sign N3JXP, his email \texttt{geb@cadre.dsl.pitt.edu}, and the quote ``Skepticism is the chastity of the intellect, and it is shameful to surrender it too soon" (see e.g.,~\cite[Pg. 259]{coelho2018building}).}

 In Tables \ref{tab:ssnmf_keywords_10} and \ref{tab:ssnmf_keywords_6} we display the keywords learned by HNMF at the first and second layers, respectively, for the semi-supervised two-layer experiment, and in Figure \ref{fig:a2_matrix} we display a heat map of the $\vA_2$ matrix for this experiment which shows the relationship between the topics at each layer. We see that while most of the topics provide salient topic modelling information corresponding to the 20 Newsgroups topics, some topics are fairly unclear (e.g., Topic 6, 7 of rank 10) and the learned hierarchical structure does not adhere well to the expected structure (e.g., baseball and motorcycles do not collect into the recreation super-topic, medicine and space do not collect into the science super-topic).

\begin{table}
	\centering
	\caption{Classification accuracies of each layer given for a two layer unsupervised experiment and a two layer semisupervised experiment on the subsampled 20 Newsgroups dataset. }
	\label{tab:20newssub}
	\begin{tabular}{| c | c | c | c | c | c |}
		\hline
		& \hspace{-0.1cm}Layer\hspace{-0.1cm} & Hier. NMF & Deep NMF & Neural NMF \\
		\hline
		\multirow{2}{*}{\hspace{-0.1cm}Unsuper.\hspace{-0.1cm}} & 1 & 0.593 & \textbf{0.638} & 0.604\\
		& 2 & 0.507 & 0.444 & \textbf{0.532} \\
		\hline
		\multirow{2}{*}{\hspace{-0.1cm}Semisuper.\hspace{-0.1cm}} & 1 & 0.593 & 0.642 & \textbf{0.690} \\
		& 2 & 0.546 & 0.536 & \textbf{0.654} \\

		\hline
	\end{tabular}
\end{table}

\begin{table}[htbp]
	\centering
	\caption{Topic keywords for layer 1 of the subsampled 20 Newsgroups dataset produced by Neural NMF.}
	\resizebox{\textwidth}{!}{
	\begin{tabular}{|c|cccccccccc|}
		\hline
		& Topic 1 & Topic 2 & Topic 3 & Topic 4 & Topic 5 & Topic 6 & Topic 7 & Topic 8 & Topic 9 & Topic 10 \\
		\hline
1 & drive & new & sale & bike & msg & geb & space & humanist & people & jesus \\
2 & apple & dj & offer & dod & food & pitt & launch & article & gun & god \\
3 & video & following & drive & motorcycle & used & banks & nasa & politics & israel & people \\
4 & mhz & st & mb & helmet & object & gordon & shuttle & omran & fbi & bible \\
5 & card & computer & best & games & dietz & n3jxp & moon & bedouin & government & christians \\
6 & graphics & fm & color & stadium & disease & chastity & ideas & backcountry & us & israel \\
7 & mac & mower & software & got & responses & cadre & orbit & speaking & say & christian \\
8 & sound & sold & disks & baseball & patients & skepticism & lunar & liar & arab & religion \\
9 & powerbook & battery & game & year & diet & dsl & centaur & absood & jews & order \\
10 & know & remains & shipping & players & epilepsy & shameful & medical & john & killed & rosicrucian \\
		\hline
	\end{tabular}%
	}
	\label{tab:keywords_10}%
\end{table}

\begin{table}[htbp]
	\centering
	\caption{Topic keywords for layer 2 of the subsampled 20 Newsgroups dataset produced by Neural NMF.}
	\resizebox{0.8\textwidth}{!}{
	\begin{tabular}{|c|cccccc|}
		\hline
		& Topic 1 & Topic 2 & Topic 3 & Topic 4 & Topic 5 & Topic 6 \\
		\hline
1 & drive & sale & bike & space & people & jesus \\
2 & apple & new & dod & know & gun & god \\
3 & video & computer & motorcycle & msg & israel & people \\
4 & graphics & offer & helmet & launch & article & bible \\
5 & mhz & drive & stadium & diet & fbi & israel \\
6 & card & shipping & got & cost & government & christians \\
7 & sound & mb & games & disease & guns & christian \\
8 & mac & sell & baseball & heard & us & religion \\
9 & powerbook & color & players & used & mr & jews \\
10 & projector & best & uhhhh & centaur & dear & rosicrucian \\
		\hline
	\end{tabular}%
	}
	\label{tab:keywords_6}%
\end{table}

\begin{figure}
	\centering
	\caption{Heatmap of the $A_2$ matrix for an experiment on the subsampled 20 Newsgroups dataset produced by Neural NMF, which illustrates how six supertopics are formed by linearly combining ten subtopics.}
	\includegraphics[width=.7\linewidth]{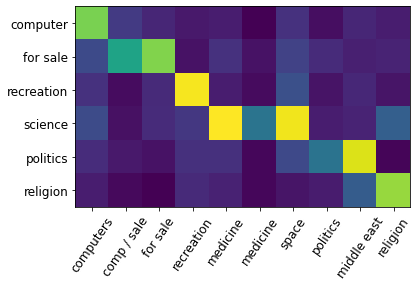}
	\label{fig:a2_matrix}
\end{figure}

\begin{table}[htbp]
	\centering
	\caption{Topic keywords for layer 1 of the subsampled 20 Newsgroups dataset produced by HNMF.}
	\resizebox{\textwidth}{!}{
	\begin{tabular}{|c|cccccccccc|}
		\hline
		& Topic 1 & Topic 2 & Topic 3 & Topic 4 & Topic 5 & Topic 6 & Topic 7 & Topic 8 & Topic 9 & Topic 10 \\
		\hline
1 & mail & mb & sale & team & geb & know & fbi & people & israel & jesus \\
2 & modem & drive & games & runs & pitt & files & space & gun & arab & god \\
3 & mac & card & offer & win & gordon & program & koresh & hudson & israeli & christians \\
4 & keyboard & ram & shipping & games & banks & motorcycle & government & us & lebanon & bible \\
5 & internal & vga & game & year & dsl & bike & fire & say & soldiers & christian \\
6 & apple & hard & power & pitching & chastity & postscript & time & guns & peace & christ \\
7 & computer & video & best & game & cadre & format & launch & moral & arabs & life \\
8 & use & floppy & system & last & njxp & file & handheld & morality & jews & law \\
9 & cable & color & sound & fans & skepticism & question & jmd & way & lebanese & love \\
10 & software & mhz & super & rbi & shameful & dod & com & data & occupied & jews \\
		\hline
	\end{tabular}%
	}
	\label{tab:ssnmf_keywords_10}%
\end{table}

\begin{table}[htbp]
	\centering
	\caption{Topic keywords for layer 2 of the subsampled 20 Newsgroups dataset produced by Hierarchical.}
	\resizebox{0.8\textwidth}{!}{
	\begin{tabular}{|c|cccccc|}
		\hline
		& Topic 1 & Topic 2 & Topic 3 & Topic 4 & Topic 5 & Topic 6 \\
		\hline
1 & mb & sale & team & geb & israel & jesus \\
2 & drive & games & runs & pitt & arab & god \\
3 & mail & offer & win & gordon & israeli & christians \\
4 & modem & shipping & year & banks & people & bible \\
5 & mac & game & games & dsl & government & christian \\
6 & know & power & pitching & chastity & lebanon & christ \\
7 & keyboard & system & game & cadre & fbi & life \\
8 & computer & best & last & njxp & soldiers & law \\
9 & internal & disks & fans & skepticism & peace & love \\
10 & apple & sound & rbi & shameful & arabs & jews \\
		\hline
	\end{tabular}%
	}
	\label{tab:ssnmf_keywords_6}%
\end{table}

\begin{figure}
	\centering
	\caption{Heatmap of the $A_2$ matrix for an experiment on the subsampled 20 Newsgroups dataset produced by HNMF, which illustrates how six supertopics are formed by linearly combining ten subtopics.}
	\hspace*{-1.8cm}
	\includegraphics[width=.7\linewidth]{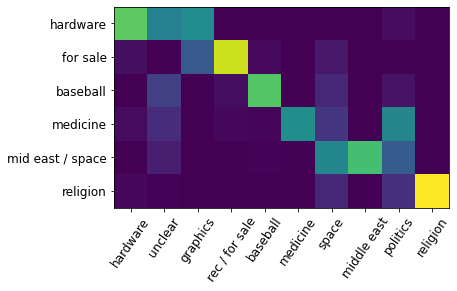}
	\label{fig:ssnmf_a2_matrix}
\end{figure}

\subsection{Lyme Data}
We conclude with a real world example using Lyme disease data. The MyLymeData dataset \cite{LymeData} used in this analysis consists of survey responses of approximately 4000 current and former Lyme disease patients (which has since grown to over 17,000).
The questions cover demographics, symptoms at various stages of the disease, medical procedures, and more. Responses may take binary, categorical, or scalar values.  Each patient is self-identified as `well' or `unwell' at the time of the survey.
We center our analysis on a subset of the dataset concerning patient symptom information shortly after an initial tick bite and at the time of diagnosis.  These questions yield binary data indicating whether or not, for example, a patient observed a `bulls-eye rash' somewhere on his or her body at the time of possible diagnosis.
As we do not know the hierarchy of this data a priori, we simply run exploratory experiments in this section.  We highlight potential advantages of Neural NMF illustrated in these results.

\begin{figure}
\centering
\includegraphics[width=\textwidth]{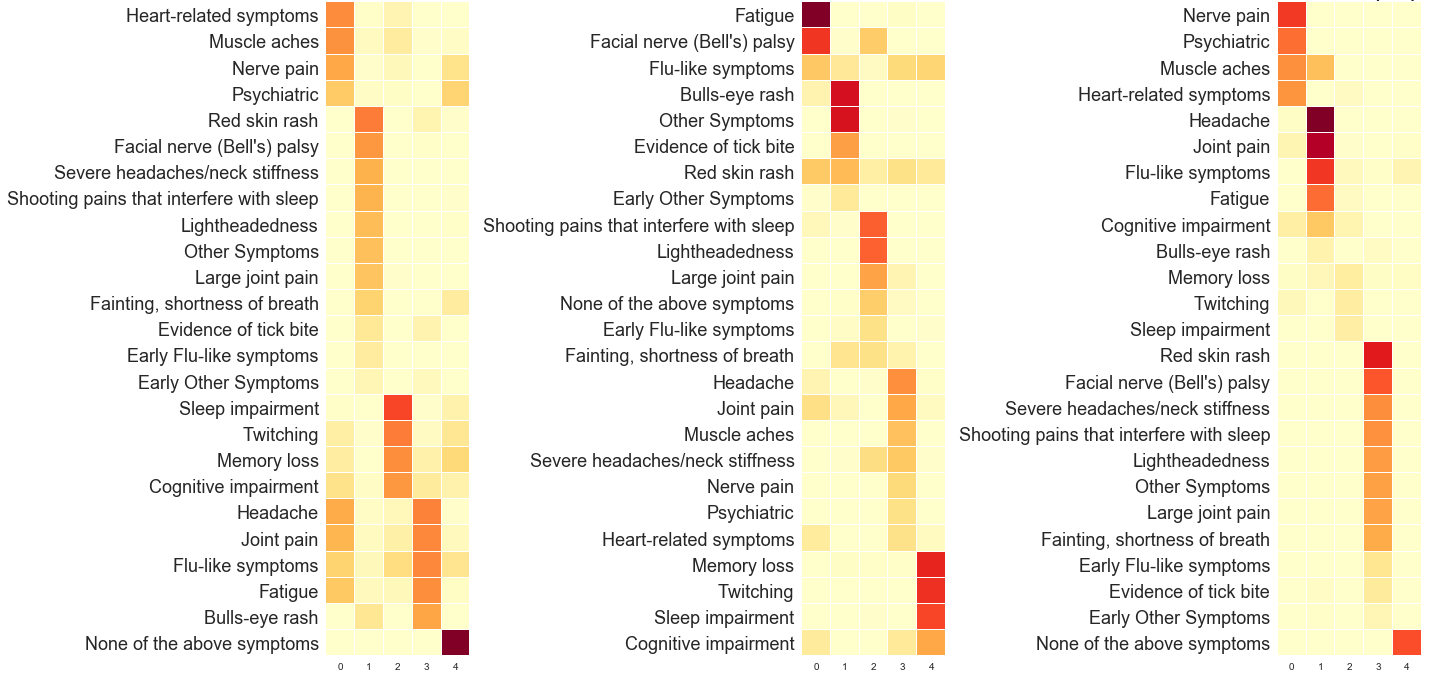}
\caption{The resulting S matrices from running unsupervised versions of NMF (left), HNMF (middle), and Neural NMF (right) on the full Lyme symptom dataset.  The NMF plot has model rank $k^{(0)} = 5$, while the plots for HNMF and Neural NMF are the visualizations of the layer with model rank $k^{(1)} = 5$ after running each method with layer structure $k^{(0)} = 6, k^{(1)} = 5$, and $k^{(2)} = 4$.}
\label{fig:nmf vs HNMF vs backprop combined symptom data 6 topics}
\end{figure}

Figure \ref{fig:nmf vs HNMF vs backprop combined symptom data 6 topics} is an example of the $\vS$ matrix when NMF is performed with model rank $k = 5$ alongside the $\vS$ matrices at the second layer for HNMF and Neural NMF with full network structure $k^{(0)} = 6, k^{(1)} = 5$, and $k^{(2)} = 4$.
First, note in Figure~\ref{fig:nmf vs HNMF vs backprop combined symptom data 6 topics} that at the second layer, Neural NMF reveals topics that are \textit{extremely} similar to those produced by simple NMF.
The Neural NMF topic structure reveals in particular an interesting observation about the appearance of a bulls eye rash, previously thought to be a critical element to a Lyme disease diagnosis. However, in the Neural NMF topic structure, this symptom actually plays very little role in any of the topics, versus appearing fairly strongly in the other two structures. Further investigation here is critical, as physicians are now beginning to agree that the prevalence (and thus importance) of the bulls eye rash symptom may be seriously less than previously believed.

\begin{figure}
\centering
\includegraphics[width=.8\textwidth]{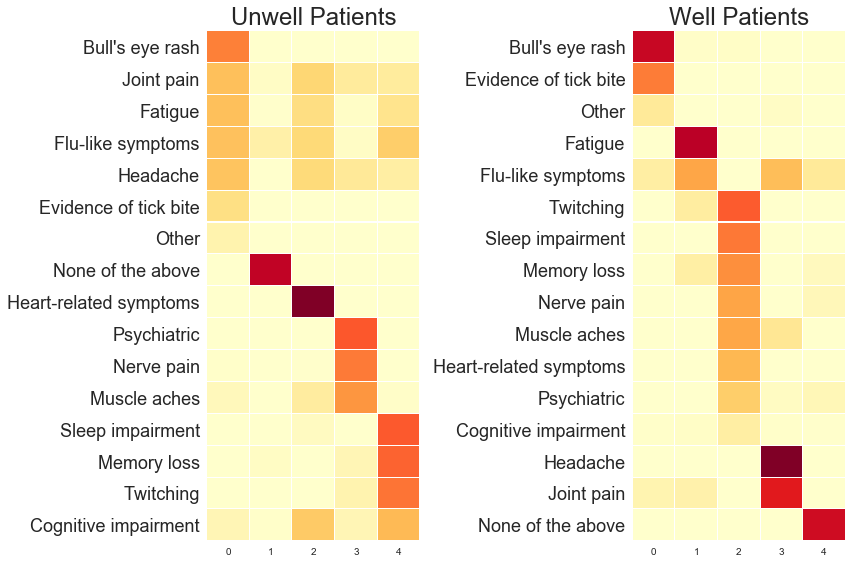}
\caption{The resulting $S$ matrices from applying Neural NMF with layer structure $k^{(0)} = 6$ and $k^{(1)} = 5$ to matrices containing data from unwell and well patients about symptoms at the time of first diagnosis. }
\label{fig:well vs unwell symptoms at diagnosis results}
\end{figure}

To further hone this observation, we present the results of Neural NMF with $k^{(0)} = 6$ and $k^{(1)} = 5$ on a subset of the data corresponding to symptoms experienced by patients at diagnosis but for unwell and well patients separately.  This is shown in Figure \ref{fig:well vs unwell symptoms at diagnosis results}, where we see drastically different positioning of the bulls eye rash symptom. In well patients, it forms a very strong topic indicating a prevalence in that patient group.  However, in unwell patients, it is mildly represented among a topic seeming to indicate general malaise. This warrants further investigation again, as this rash may play a role in whether a patient becomes well or not.
A final interesting observation is that twitching, a physical symptom, appears in the same topic as cognitive symptoms such as sleep impairment and memory loss.  These are all symptoms of a so-called neurological Lyme disease manifestation that is still not yet understood.  In summary, studying an accurate hierarchical topic structure via Neural NMF leads to important and interesting directions of further study both mathematically and medically.

\section{Conclusion}\label{sec:conclusion}

We present a novel method for hierarchical multilayer nonnegative matrix factorization which incorporates the backwards propagation technique from deep learning to minimize error accumulation.
Preliminary tests on toy datasets show this method outperforms existing multilayer NMF algorithms. The forward and backwards propagation steps of Neural NMF may offer decreased reconstruction and classification error over single-application HNMF.  Additionally, it seems that Neural NMF often better resolves data points into a single topic.

Future directions include to further compare Neural NMF and others on various datasets in order to find precise regimes in which it offers substantial improvement.  Furthermore, theoretically analyzing the convergence of Neural NMF on ideal datasets (those containing latent hierarchy) is an important future direction of work.

\section{Acknowledgements}
The authors would like to thank LymeDisease.org for the use of data derived from MyLymeData to conduct our experiments and the patients for their contributions to MyLymeData.  They additionally thank Dr.\ Anna Ma for her instruction on the MyLymeData dataset, LymeDisease.org CEO Lorraine Johnson for her collaboration, Dr.\ Blake Hunter for proposing the project, and Professor Andrea Bertozzi, Director of Applied Math at UCLA, for organizing the REU program through which this research was conducted.

\bibliography{bib}
\bibliographystyle{acm}

\appendix \label{sec:appendix}
\section{Proofs of Lemma \ref{q_cts} and Proposition \ref{measure_zero}} \label{proofs}

First, we state two lemmas about $q$, which will lead to the derivative formula. Our first lemma demonstrates that $q$ actually acts column-wise on $\vX$.
\begin{lemma} \label{q_acts_columnwise}
	Suppose $\vA \in \mathbb{R}_+^{N \times k}$ and $\vX \in \mathbb{R}_+^{N \times M}$ are nonnegative matrices (with the same number of rows), and $\vA$ has full column rank. Then, for all $m$,
	$$q_{:,m}(\vA,\vX) = q(\vA,\vX_{:,m}).$$
\end{lemma}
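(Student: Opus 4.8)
The plan is to exploit the fact that both the objective and the constraint in the definition of $q$ separate across the columns of the optimization variable. Writing $\vS = [\vS_{:,1} \mid \cdots \mid \vS_{:,M}]$ and recalling that $\|\cdot\|$ denotes the Frobenius norm, the key algebraic identity is
$$\|\vX - \vA\vS\|^2 = \sum_{m=1}^M \|\vX_{:,m} - \vA\vS_{:,m}\|^2,$$
since the squared Frobenius norm of a matrix is the sum of the squared $\ell_2$-norms of its columns and the $m$-th column of $\vA\vS$ is exactly $\vA\vS_{:,m}$. Simultaneously, the feasibility constraint $\vS \geq 0$ holds if and only if $\vS_{:,m} \geq 0$ for each $m$ separately.

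Because the objective is a sum of terms, each depending on a single column $\vS_{:,m}$, and the constraint decouples into independent per-column constraints, the joint minimization over $\vS \geq 0$ splits into $M$ independent scalar-column minimizations. First I would make this decoupling precise: given any feasible $\vS$, replacing its $m$-th column by $q(\vA,\vX_{:,m})$ can only decrease (or leave unchanged) the $m$-th summand while preserving feasibility and leaving the other summands untouched; conversely, assembling the columnwise minimizers into a matrix produces a feasible point attaining the infimum of the matrix problem. Hence the $m$-th column of any minimizer of the matrix problem is a minimizer of the $m$-th scalar problem, which is exactly the claimed identity $q_{:,m}(\vA,\vX) = q(\vA,\vX_{:,m})$.

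The one point requiring care — and the only place the full-column-rank hypothesis is used — is that $q$ must be single-valued for the statement to make sense. When $\vA$ has full column rank, $\vA^\top\vA$ is positive definite, so each map $\vs \mapsto \|\vX_{:,m} - \vA\vs\|^2$ is strictly convex; minimizing a strictly convex function over the closed convex set $\mathbb{R}_+^{k}$ yields a unique minimizer. Thus both $q(\vA,\vX)$ and each $q(\vA,\vX_{:,m})$ are well-defined, and the decoupling argument shows they agree column by column. I do not anticipate a genuine obstacle here; the entire content of the lemma is the separability of the Frobenius norm together with strict convexity to guarantee uniqueness of the argmin.
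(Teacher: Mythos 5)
Your proof is correct and follows essentially the same route as the paper, which disposes of this lemma with a one-line remark that it is ``a consequence of the column-wise separability of the Frobenius norm objective.'' Your added observation that full column rank makes each columnwise problem strictly convex, so that $q$ is single-valued and the statement is well-posed, is a worthwhile point of rigor that the paper leaves implicit.
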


This lemma is a consequence of the column-wise separability of the Frobenius norm objective
and simplifies the problem of finding the derivatives of $q(\vA,\vX)$ for general $\vA$ and $\vX$ to the problem of finding the derivatives of $q(\vA,\vx)$ where $\vx$ is a column vector of size $N$. The following lemma now gives us a formula for $q(\vA,\vx),$ which we will be able to differentiate.

\begin{lemma} \label{formula_for_q}
	Suppose $\vA \in \mathbb{R}_+^{r,s}$ has full column rank, and $\vx \in \mathbb{R}_+^r$. Let $T$ be the support of $q(\vA,\vx)$, i.e., the set of indices on which the vector is strictly positive. Then $q(\vA,\vx)$ is given by
	$$q_T(\vA,\vx) = \vA_{:,T}^\dagger  \vx, \qquad  q_{T^c}(\vA,\vx) = 0.$$
\end{lemma}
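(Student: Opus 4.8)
The plan is to characterize $\vs^\star := q(\vA,\vx)$ through first-order optimality conditions. The second identity $q_{T^c}(\vA,\vx)=0$ is immediate, since $T$ is by definition exactly the support of $\vs^\star$. All the content therefore lies in showing that the entries of $\vs^\star$ indexed by $T$ coincide with the ordinary (unconstrained) least-squares solution that uses only the columns in $\vA_{:,T}$. The guiding observation is that on $T$ the nonnegativity constraints are inactive, so the corresponding partial derivatives of the objective must vanish there, producing a restricted system of normal equations.

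To make this precise I would argue by a local perturbation rather than invoke the KKT theorem wholesale. Since $\vs^\star_i>0$ for every $i\in T$, there is a ball around $\vs^\star_T$ in $\mathbb{R}^{|T|}$ on which every vector is entrywise positive; extending such vectors by zeros on $T^c$ yields feasible points for the constraint $\vs\ge 0$. On this slice the objective $\|\vx-\vA\vs\|$ reduces to $\vs_T\mapsto\|\vx-\vA_{:,T}\vs_T\|$, because $\vA\vs=\vA_{:,T}\vs_T$ when $\vs_{T^c}=0$; as minimizing this norm is equivalent to minimizing the smooth convex function $g(\vs_T):=\|\vx-\vA_{:,T}\vs_T\|^2$, I pass to $g$. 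Because $\vs^\star$ minimizes the objective over the entire feasible set, $\vs^\star_T$ is a local minimizer of $g$, and convexity upgrades this to a global minimizer, so $\nabla g(\vs^\star_T)=0$. Writing out the gradient gives the restricted normal equations $\vA_{:,T}^\top\big(\vx-\vA_{:,T}\vs^\star_T\big)=0$, that is, $\vA_{:,T}^\top\vA_{:,T}\,\vs^\star_T=\vA_{:,T}^\top\vx$.

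It then remains to solve this system. Because $\vA$ has full column rank, any subset of its columns is linearly independent, so $\vA_{:,T}$ also has full column rank and $\vA_{:,T}^\top\vA_{:,T}$ is invertible. Inverting yields $\vs^\star_T=(\vA_{:,T}^\top\vA_{:,T})^{-1}\vA_{:,T}^\top\vx=\vA_{:,T}^\dagger\vx$, using the standard formula for the pseudoinverse of a full-column-rank matrix, which is precisely the claimed identity. I expect the only genuinely delicate step to be the justification that the gradient vanishes on $T$: one must verify that perturbing the $T$-coordinates in both directions preserves feasibility — this is exactly where strict positivity on the support enters — and that the reduced problem inherits convexity, so that a stationary point of $g$ is its global minimizer and hence the least-squares solution. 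The remaining manipulations are routine linear algebra.
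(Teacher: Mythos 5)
Your proof is correct and follows essentially the same route as the paper's: both arguments use strict positivity on the support to show that $q_T(\vA,\vx)$, extended by zeros, can be perturbed within the feasible set, hence is a local (and by convexity, global) minimizer of the reduced problem $\vs_T \mapsto \|\vx - \vA_{:,T}\vs_T\|$, and then invoke full column rank of $\vA_{:,T}$ to identify it with $\vA_{:,T}^\dagger \vx$. The only cosmetic difference is that you write out the normal equations $\vA_{:,T}^\top\vA_{:,T}\vs^\star_T = \vA_{:,T}^\top\vx$ explicitly, whereas the paper cites the uniqueness of the minimizer of the unconstrained full-column-rank least-squares problem directly.
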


\begin{proof}
	Since $q(\vA,\vx) = \argmin_{\vs \geq 0} \|\vx - \vA\vs\|$, for any nonnegative vector $\vs$,
	$$\|\vx-\vA\vs\| \geq \|\vx - \vA q(\vA,\vx)\|.$$
	Then, using that $q(\vA,\vx)$ has support $T$ (and so is zero outside that support), we get
	$$\|\vx - \vA q(\vA,\vx)\|= \|\vx - \vA_{:,T} q_T(\vA,\vx) - \vA_{:,T^c} q_{T^c}(\vA,\vx)\| = \|\vx - \vA_{:,T} q_T(\vA,\vx)\|,$$
	so
	$$\|\vx-\vA\vs\| \geq \|\vx - \vA_{:,T} q_T(\vA,\vx)\|$$
	for any nonnegative vector $\vs$. Suppose $\vw$ is in $(0,\infty)^{|T|}$, i.e., it is a positive vector of length $|T|$. Let $\vv$ be the vector in $\mathbb{R}_+^r$ with $\vv_T = \vw$ and $\vv_{T^c} =0$. Then
	$$\|\vx - \vA_{:,T}\vw\| = \|\vx - (\vA_{:,T}\vv_T + \vA_{:,T^c}\vv_{T^c}) \| = \|\vx - \vA \vv\| \geq \|\vx -\vA_{:,T}q_T(A,x) \|.$$
	This holds for any $\vw$ in $(0,\infty)^{|T|}$.

	Since $q(\vA,\vx)$ has support equal to $T$, all of the entries of $q_T(\vA,\vx)$ are positive, so it is in $(0,\infty)^{|T|}$. So, by what we just showed,
	$$q_T(\vA,\vx) \in \argmin_{\vw \in (0,\infty)^{|T|}} \|\vx-\vA_{:,T}\vw\|.$$
	Thus, since $(0,\infty)^{|T|}$ is an open set, we have that $q_T(\vA,\vx)$ is a local minimum of the function
	$$\vw \mapsto \|\vx- \vA_{:,T} \vw\|.$$
	Since $\vA$ has full column rank, $\vA_{:,T}$, a matrix whose columns are a subset of those of $\vA$, must also have full column rank. But then we know that this function has only one local minimum, equal to its global minimum, and is given by $\vA_{:,T}^\dagger  \vx$. Thus, we have that
	$$q_T(\vA,\vx) = \vA_{:,T}^\dagger  \vx.$$

	Using the fact that, by the definition of support, $q_{T^c}(\vA,\vx) = 0$, we arrive at the desired result.
\end{proof}

This result cannot be used to calculate $q(\vA,\vx)$ since it requires knowledge \textit{a priori} of the answer's support. However, if the support of $q(\vA,\vx)$ is locally constant in $(\vA,\vx)$-space, then once we have calculated the support of $q(\vA,\vx)$ we can use this formula to differentiate $q$ with respect to $\vA$ and $\vx$. Lemma \ref{measure_zero} guarantees that we can do this in almost all circumstances.  The following lemma will be used in the proof of Lemma \ref{measure_zero}.

\begin{lemma} \label{q_cts}
	Let $V_{r,s}$ be the set of matrices in $\mathbb{R}_+^{r \times s}$ ($r \geq s$) with full column rank. Then $q$ is continuous on $V_{r,s} \times \mathbb{R}_+^n$.
\end{lemma}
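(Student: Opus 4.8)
The plan is to prove continuity at an arbitrary fixed point $(\vA_0,\vx_0) \in V_{r,s}\times\mathbb{R}_+^r$ via a sequential compactness argument, exploiting two consequences of the full-column-rank hypothesis built into $V_{r,s}$: uniqueness of the minimizer and a coercivity bound that confines the minimizers to a bounded set. First I would record that since $\vA_0$ has full column rank, $\vA_0^\top\vA_0 \succ 0$, so $\vs \mapsto \enormsq{\vx_0 - \vA_0\vs}$ is strictly convex; minimized over the closed convex cone $\{\vs \ge 0\}$ it admits a unique minimizer, so $q$ is genuinely single-valued (and likewise at every point of $V_{r,s}\times\mathbb{R}_+^r$), which is what lets us aim for continuity as a \emph{function} rather than merely hemicontinuity of a correspondence.

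Next, take any sequence $(\vA_n,\vx_n) \to (\vA_0,\vx_0)$ in $V_{r,s}\times\mathbb{R}_+^r$ and set $\vs_n = q(\vA_n,\vx_n)$. The crux is to show that $\{\vs_n\}$ is bounded. Since $\vs = 0$ is feasible, optimality of $\vs_n$ gives $\enorm{\vx_n - \vA_n\vs_n} \le \enorm{\vx_n}$, whence $\enorm{\vA_n\vs_n} \le 2\enorm{\vx_n}$ by the triangle inequality. Full column rank yields $\enorm{\vA_n\vs_n} \ge \sigma_{\min}(\vA_n)\enorm{\vs_n}$, and because singular values depend continuously on the matrix, $\sigma_{\min}(\vA_n) \to \sigma_{\min}(\vA_0) > 0$, so $\sigma_{\min}(\vA_n) \ge c > 0$ for all large $n$. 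Combining, $\enorm{\vs_n} \le 2\enorm{\vx_n}/c$, which is bounded since $\vx_n \to \vx_0$.

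Then I would pass to the limit. By Bolzano--Weierstrass, any subsequence of $\{\vs_n\}$ has a convergent sub-subsequence $\vs_{n_k} \to \vs^\star$, and the limit is feasible ($\vs^\star \ge 0$ is a closed condition). For any fixed $\vs \ge 0$, optimality of $\vs_{n_k}$ gives $\enorm{\vx_{n_k} - \vA_{n_k}\vs_{n_k}} \le \enorm{\vx_{n_k} - \vA_{n_k}\vs}$; letting $k \to \infty$ and using joint continuity of $(\vA,\vx,\vs) \mapsto \enorm{\vx - \vA\vs}$ yields $\enorm{\vx_0 - \vA_0\vs^\star} \le \enorm{\vx_0 - \vA_0\vs}$. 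Thus $\vs^\star$ minimizes the objective at $(\vA_0,\vx_0)$, and by uniqueness $\vs^\star = q(\vA_0,\vx_0)$. Since the bounded sequence $\{\vs_n\}$ has every subsequential limit equal to the single value $q(\vA_0,\vx_0)$, the whole sequence converges to it, giving continuity at $(\vA_0,\vx_0)$; as the point was arbitrary, $q$ is continuous on $V_{r,s}\times\mathbb{R}_+^r$.

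The main obstacle is the boundedness step, and it is exactly where the full-column-rank hypothesis is indispensable: without a uniform positive lower bound on $\sigma_{\min}(\vA_n)$ the minimizers could escape to infinity (and uniqueness could fail), so the argument does not merely use continuity of the objective but genuinely relies on the defining property of $V_{r,s}$. An alternative packaging is to invoke Berge's maximum theorem — the feasible correspondence $\{\vs \ge 0\}$ is constant hence continuous, the objective is jointly continuous, and after the coercivity reduction to a fixed compact set one obtains upper hemicontinuity of the argmin, which for a single-valued correspondence is continuity — but the same coercivity bound is required in either route.
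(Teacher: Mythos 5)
Your proof is correct, but it takes a genuinely different route from the paper's. The paper's argument has three stages: (i) it bounds $\|q(\vA',\vx')\|$ locally by $J_1\|\vx'\|$, where $J_1$ is a maximum of pseudoinverse norms $\bigl\|{\vA'_{:,T}}^\dagger\bigr\|$ over all supports $T$ and over a compact closure of a neighborhood of $\vA$ (using the explicit formula $q_T = \vA_{:,T}^\dagger\vx$ from Lemma~\ref{formula_for_q} plus the extreme value theorem); (ii) it proves by a chain of optimality and triangle inequalities that the scalar map $(\vA',\vx') \mapsto \|\vx - \vA\, q(\vA',\vx')\|$ is continuous at $(\vA,\vx)$; and (iii) it invokes Rockafellar's Theorem~27.2 to convert this value-continuity into continuity of the minimizer itself. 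You instead run the standard argmin-stability argument: boundedness of the minimizers via coercivity (comparison with the feasible point $\vs=0$ together with a uniform lower bound on $\sigma_{\min}(\vA_n)$ from continuity of singular values), then Bolzano--Weierstrass plus passage to the limit in the optimality inequality to identify every subsequential limit with the unique minimizer at the limit point. Your route is shorter and more self-contained: it needs neither the pseudoinverse formula for $q$ nor the external convex-analysis result, only strict convexity (hence uniqueness) and joint continuity of the objective; as you note, it is essentially Berge's maximum theorem specialized to a single-valued argmin. What the paper's route buys is explicitness --- concrete local bounds and an explicit $\delta$ in terms of $\epsilon$ --- and it reuses machinery (Lemma~\ref{formula_for_q}) already developed for the derivative computations; but nothing downstream in the paper depends on those intermediate estimates, only on the continuity statement itself, so your proof would serve equally well as a drop-in replacement. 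One small point of care in your argument: the uniform bound $\sigma_{\min}(\vA_n)\ge c>0$ holds only for $n$ large, which is all you use, and the sequential characterization of continuity is valid here since the domain is a metric space --- so there is no gap.
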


\begin{remark}
In the following proof, all matrix norms are taken to be spectral norms, rather than Frobenius norms.
\end{remark}

\begin{proof}
Fix a point $(\vA, \vx)$ in $V_{r,s} \times \mathbb{R}_+^r$; we will show $q$ is continuous there. Let $N$ be a bounded neighborhood of $\vA$ such that $\overline{N} \subset V_{r,s}$. Let $p_T : \mathbb{R}^{r \times s} \rightarrow \mathbb{R}^{r \times |T|}$ be the projection which maps $\vA \mapsto \vA_{:,T}$ for $T \subset [s]$. As a projection map, $p_T$ is clearly continuous, and so $p_T \left(\overline{N} \right)$ is compact, since $\overline{N}$ is compact. Since $\vA'_{:,T}$ must have full column rank if $\vA'$ does, and since $\overline{N} \subset V_{r,s}$, we must have that $p_T \left(\overline{N} \right) \subset V_{r,|T|}$. Since the pseudoinverse operation is continuous on sets of matrices with constant rank, it is continuous on $p_T \left(\overline{N} \right)$. Additionally, the spectral norm is continuous everywhere, so we get that the map $n_T : \overline{N} \rightarrow \mathbb{R}$ given by
$$\vA' \mapsto \left\|{\vA'_{:,T}}^\dagger \right\|$$
is continuous for each nonempty $T \subset [s]$. By the extreme value theorem, for each nonempty $T \subset [s]$, there is a matrix $\vA^T$ in $\overline{N}$ such that
$$\left\|{\vA^T_{:,T}}^\dagger \right \| \geq \left\|{\vA'_{:,T}}^\dagger \right\|$$
for all $\vA'$ in $\overline{N}$. Then we let
$$J_1 = \max_{\substack{T \subset [s] \\ T \neq \emptyset}} \left\|{\vA^T_{:,T}}^\dagger \right\|.$$

Thus, for all $\vA'$ in $\overline{N}$, we must have that $\left\|{\vA'_{:,T}}^\dagger \right\| \leq J_1$ for any set of indices $T \subset [s]$. In particular, for any $(\vA',\vx')$ in $N \times \mathbb{R}_+^r$, by Lemma \ref{formula_for_q}, we have that for $T = \text{supp}(q(\vA',\vx'))$, $q_T(\vA',\vx') = {\vA'_{:,T}}^\dagger \vx'$ and $q_{T^c}(\vA',\vx') = 0$, so
$$\|q(\vA',\vx')\| = \|q_T(\vA',\vx')\| = \left \|{\vA'_{:,T}}^\dagger \vx' \right\| \leq \left\|{\vA'_{:,T}}^\dagger \right\| \| \vx'\| \leq J_1 \|\vx'\|,$$
where the last inequality follows from the definition of $J_1$.
Let $N'$ be a bounded neighborhood of $\vx$ in $\mathbb{R}_+^r$. Let $J_2 = \sup_{x' \in N'} \|x'\|$. This finally gives that for all $(\vA',\vx') $ in $N \times N'$,
$$\|q(\vA',\vx')\| \leq J_1 \|\vx'\| \leq J_1 J_2.$$

Now, suppose $(\vA', \vx')$ is in $N \times N'$. By the definition of $q$, we have that
\begin{equation} \label{axmin}
\|\vx - \vA q(\vA,\vx)\| \leq \|\vx - \vA q(\vA',\vx')\|,
\end{equation}
and
\begin{equation} \label{a'x'min}
\|\vx' - \vA' q(\vA',\vx')\| \leq \|\x' - \vA' q(\vA,\vx)\|.
\end{equation}
We claim that the map $\|\vx - \vA q(\cdot)\|$ is continuous at $(\vA,\vx)$, i.e., for each $\epsilon>0$, there is an open set $W \subset \mathbb{R}_+^{r \times s} \times \mathbb{R}_+^r$ containing $(\vA,\vx)$ such that for all $(\vA',\vx')$ in $W$ we have
$$\left|\|\vx - \vA q(\vA',\vx')\|-\|\vx - \vA q(\vA,\vx)\| \right| < \epsilon.$$
To see this, fix $\epsilon>0$. We have that for $(\vA', \x')$ and $(\vA'',\vx'')$ in $N \times N'$, by the reverse triangle inequality,
\begin{align*}
\left|\|\vx - \vA q(\vA'',\vx'')\| - \|\vx'-\vA'
q(\vA'',\vx'')\|\right| &\leq \left\|(\vx-\vA q(\vA'',\vx''))-(\vx'-\vA'q(\vA'',\vx'')) \right\| \\&= \left\|(\vx-\vx') - (\vA-\vA')q(\vA'',\vx'')\right\| \\&\leq \|\vx-\vx'\| + \|\vA-\vA'\|\|q(\vA'',\vx'')\| \\&\leq \|\vx-\vx'\| + J_1 J_2\|\vA-\vA'\|.
\end{align*}
In particular, the above holds for $(\vA'',\vx'') = (\vA,\vx)$ and $(\vA'',\vx'') = (\vA',\vx')$, so
$$\left|\|\vx - \vA q(\vA',\vx')\| - \|\vx'-\vA'q(\vA',\vx')\|\right| \leq\|\vx-\vx'\| + J_1 J_2\|\vA-\vA'\|,$$
$$\left|\|\vx - \vA q(\vA, \vx)\| - \|\vx'-\vA'q(\vA,\vx)\|\right| \leq\|\vx-\vx'\| + J_1 J_2\|\vA-\vA'\|.$$
From these, we get
\begin{equation} \label{a'x'bound}
\|\vx-\vA q(\vA',\vx')\| \leq \|\vx-\vx'\|+J_1 J_2\|\vA-\vA'\| + \|\vx'- \vA' q(\vA',\vx')\|,
\end{equation}
\begin{equation} \label{axbound}
-\|\vx-\vA q(\vA, \vx)\| \leq \|\vx-\vx'\| + J_1 J_2 \|\vA-\vA'\|-\|\vx'-\vA'q(\vA,\vx)\|.
\end{equation}
Now, using equation \eqref{axmin}, we get
$$\left|\|\vx - \vA q(\vA',\vx')\|-\|\vx - \vA q(\vA,\vx)\| \right| = \|\vx - \vA q(\vA',\vx')\| - \|\vx - \vA q(\vA,\vx)\|.$$
Now, using equations \eqref{a'x'bound} and \eqref{axbound}, we get
$$\|\vx - \vA q(\vA',\vx')\| - \|\vx - \vA q(\vA,\vx)\| \leq 2\left(\|\vx-\vx'\|+J_1 J_2\|\vA-\vA'\|\right) + \|\vx'-\vA' q(\vA',\vx')\| -\|\vx'-\vA'q(\vA,\vx)\|.$$
Finally, applying equation \eqref{a'x'min}, we have
$$\|\vx'-\vA' q(\vA',\vx')\| -\|\vx'-\vA'q(\vA,\vx)\| \leq 0.$$
Putting this all together gives
$$\left|\|\vx - \vA q(\vA',\vx')\|-\|\vx - \vA q(\vA,\vx)\| \right| \leq 2\left(\|\vx-\vx'\|+J_1 J_2\|\vA-\vA'\|\right),$$
for any $(\vA',\vx')$ in $N \times N'$.
Let
$$\delta = \frac{\epsilon}{8 \max(J_1 J_2,1)}.$$
Then let $B_\delta (\vx)$ be the open $\ell_2$ ball centered at $\vx$ in $\mathbb{R}^r$ of radius $\delta$, and $B_\delta (\vA)$ be the open spectral norm ball centered at $\vA$ in $\mathbb{R}^{r \times s}$ of radius $\delta$. Then we have that for all $(\vA', \vx')$ in $(B_\delta(\vx) \cap N) \times (B_\delta(\vA) \cap N')$,
$$\left|\|\vx - \vA q(\vA',\vx')\|-\|\vx - \vA q(\vA,\vx)\| \right| \leq 2\left(\|\vx-\vx'\|+J_1 J_2\|\vA-\vA'\|\right) \leq 2(\delta + J_1 J_2 \delta) < \epsilon.$$
Thus, setting $W = (B_\delta(\vx) \cap N) \times (B_\delta(\vA) \cap N')$ gives what we wanted.

Now we show that $q$ is continuous at $(\vA,\vx)$. We fix $\epsilon>0$ and attempt to find an open set $W$ of $(\vA,\vx)$ so that $\|q(\vA,\vx)-q(\vA',\vx')\| < \epsilon$ for all $(\vA',\vx')$ in $W$. Since $q(\vA,\vx)$ is the unique global minimum (since $\vA$ has full column rank) of the convex function
$$\vs \mapsto \|\vx-\vA \vs\|$$
on $\mathbb{R}_+^r$, we can apply \cite[Theorem 27.2]{R70}. Thus, there is a $\delta>0$ such that, if $\vs$ is a nonnegative vector, then
\begin{equation} \label{f_inv_near_min}
\|\vx-\vA \vs\| < \|\vx-\vA q(\vA,\vx)\| + \delta \implies \|\vs-q(\vA,\vx)\| < \epsilon.
\end{equation}
Then since we proved that the function $\|\vx-\vA q(\cdot)\|$ is continuous at $(\vA,\vx)$, there is a neighborhood $W$ of $(\vA,\vx)$ in $\mathbb{R}_+^{r \times s}\times \mathbb{R}_+^r$ such that for all $(\vA',\vx')$ in $W$,
$$\left|\|\vx-\vA q(\vA',\vx')\|^2 - \|\vx-\vA q(\vA,\vx)\|^2\right|<\delta.$$
Then clearly for all $(\vA',\vx')$ in $W$,
$$\|\vx-\vA q(\vA',\vx')\|^2 < \|\vx-\vA q(\vA,\vx)\|^2+\delta,$$
so by \eqref{f_inv_near_min}, we get
$$\|q(\vA',\vx')-q(\vA,\vx)\|<\epsilon$$
for all $(\vA',\vx')$ in $W$, making $W$ our desired neighborhood. Thus, $q$ is continuous at $(\vA,\vx)$. Since $(\vA,\vx)$ was chosen arbitrarily in $V_{r,s} \times \mathbb{R}_+^r$, we immediately have that $q$ is continuous on this entire set, as desired.

\end{proof}

We may now prove Lemma \ref{measure_zero}.

\begin{proof}[Proof of Lemma \ref{measure_zero}]
Since in the regular Lebesgue measure, we have that full-measure sets are dense, it suffices to show that $U_{r,s}$ is open and has full-measure. First we show that it is open. We again adopt the notation $V_{r,s}$ to denote the set of matrices in $\mathbb{R}_+^{r \times s}$ ($r \geq s$) with full (column) rank. By its definition, since $q$ is only defined on $V_{r,s} \times \mathbb{R}_+^n$, we have that $(\vA,\vx) \in U_{r,s}$ exactly when there is a neighborhood $N$ of $(\vA,\vx)$ contained in $V_{r,s} \times \mathbb{R}_+^r$ such that $\supp q(\vA',\vx')$ is constant on $N$. For each $(\vA,\vx) \in U_{r,s}$, let $N_{\vA,\vx}$ be such a neighborhood. Then for all $(\vA,\vx)$ in $U_{r,s}$, $N_{\vA,\vx} \subset U_{r,s}$, since for each $(\vA',\vx')$ in $N_{\vA,\vx}$, $N_{\vA,\vx}$ is a neighborhood of $(\vA',\vx')$ in $V_{r,s} \times \mathbb{R}_+^r$ on which the support of $q$ is constant, so $(\vA',\vx') \in U_{r,s}$. Thus, we must have
$$U_{r,s} = \bigcup_{(\vA,\vx) \in U_{r,s}}N_{\vA,\vx},$$
which demonstrates that $U_{r,s}$ is open.

We now show that $U^c_{r,s}$ has zero measure. By the definition of $U_{r,s}$, we have
$$U^c_{r,s} = V_{r,s}^c \times \mathbb{R}_+^r \cup \left\{(\vA,\vx) \in V_{r,s} \times \mathbb{R}_+^r \; \Big| \; \exists (\vA^k,\vx^k) \rightarrow (\vA,\vx) \text{ s.t. } \supp q(\vA^k,\vx^k) \neq q(\vA,\vx)\right\}.$$
Since $V_{r,s}^c$ has zero measure, so does $V_{r,s}^c \times \mathbb{R}_+^r$, meaning that it suffices to show that
$$W =\left\{(\vA,\vx) \in V_{r,s} \times \mathbb{R}_+^r \; \Big| \; \exists (\vA^k,\vx^k) \rightarrow (\vA,\vx) \text{ s.t. } \supp q(\vA^k,\vx^k) \neq q(\vA,\vx)\right\}$$
has zero measure. Suppose that $(\vA,\vx)$ is in $W$. Then there is a sequence $(\vA^k,\vx^k)$ in $V_{r,s} \times \mathbb{R}_+^r$ such that $(\vA^k,\vx^k) \rightarrow (\vA,\vx)$ but $\supp q(\vA^k,\vx^k) \neq \supp q(\vA,\vx)$. Since there are only finitely many possible supports, at least one support must be represented infinitely often in the sequence $\supp q(\vA^k,\vx^k)$, so, by possibly restricting to a subsequence, we may assume without loss of generality that $\supp q(\vA^k,\vx^k)$ is constant. Let $T_1 = \supp(\vA^k,\vx^k)$ and let $T_0 = \supp (\vA,\vx)$. By hypothesis, $T_0 \neq T_1$.

We claim that $T_0 \subset T_1$. To see this, suppose it were not true. Then there would be an index $i$ in $T_0$ which is not in $T_1$. Then, since $i$ is in $T_0$, $q_i(\vA,\vx)>0$. On the other hand, $i$ is not in $T_1$, so $q_i(\vA^k,\vx^k)=0$. Since $(\vA^k,\vx^k) \rightarrow (\vA,\vx)$, and $q$ is continuous by Lemma \ref{q_cts}, we must have that $q_i(\vA^k,\vx^k) \rightarrow q_i(\vA,\vx)$, but this would imply that the zero sequence converges to a positive number, which is false. This contradiction shows that $T_0 \subset T_1$.

Since $T_0 \subset T_1$ and $T_0 \neq T_1$, there must be an index $i$ in $T_1$ which is not in $T_0$. Let $T_1(i)$ be the index of $i$ in $T_1$, i.e., so that $\vF_{:,i}$ and $\left(\vF_{:,T_1}\right)_{:,T_1(i)}$ are the same vector. Then we have that, by Lemma \ref{formula_for_q},
$$q_i(\vA^k,\vx^k) = \left({\vA^k_{:,T_1}}^\dagger \vx^k \right)_{T_1(i)},$$
whereas $q_i(\vA^k,\vx^k) = 0$ since $i$ is not in $T_0$. Again, since $q$ is continuous, we must have that $q_i(\vA^k,\vx^k) \rightarrow q_i(\vA,\vx)$, so
$$\left({\vA^k_{:,T_1}}^\dagger \vx^k \right)_{T_1(i)} \rightarrow 0$$
as $k \rightarrow \infty$. Since $(\vA',\vx') \mapsto \left({\vA'_{:,T_1}}^\dagger x' \right)_{T_1(i)}$ is a continuous map on $V_{r,s} \times \mathbb{R}_+^r$ (as the pseudoinversion, projection, and matrix multiplication operations are all continuous on this set), and $(\vA^k,\vx^k) \rightarrow (\vA,\vx)$, we must have
$$\left({\vA^k_{:,T_1}}^\dagger x^k \right)_{T_1(i)} \rightarrow \left({\vA_{:,T_1}}^\dagger x \right)_{T_1(i)}.$$
Thus,
$$\left({\vA_{:,T_1}}^\dagger x \right)_{T_1(i)} = 0.$$

Since $(\vA,\vx)$ was chosen arbitrarily in $W$, we have shown that for every $(\vA,\vx)$ in $W$, there is a set of indices $T$ and an index $i$ in $T$ such that
$$\left({\vA_{:,T}}^\dagger \vx \right)_{T(i)} = 0.$$
Let $f_T: V_{r,s} \times \mathbb{R}_+^r \rightarrow \mathbb{R}^{|T|}$ be the map given by
\begin{equation} \label{ft}
(\vA,\vx) \mapsto {\vA_{:,T}}^\dagger \vx.
\end{equation}
Let $Z_d$ be the set of all vectors in $\mathbb{R}^d$ which have at least one zero entry. Then we have shown that
$$W \subset \bigcup_{T \subset [s]} f_T^{-1}(Z_{|T|}).$$
Since the union above is finite, it suffices to show that $f_T^{-1}(Z_{|T|})$ has zero measure for each $T \subset [s]$. So we are done if we can show that the preimage of measure zero sets under $f_T$ have measure zero, as $Z_d$ is a measure zero subset of $\mathbb{R}^d$ (it is just the union of the $k$ codimension-1 coordinate planes, each of which has measure zero).

By \cite[Theorem 2]{P87}, since $f_T$ is a map from a higher dimensional space to a lower dimensional space, this will be true if we can show that $f_T$ is continuous everywhere, differentiable almost everywhere, and has a full (row) rank derivative almost everywhere. Since pseudoinversion is continuous and differentiable on the full rank matrices, and column projection and matrix multiplication are differentiable everywhere, we immediately have that $f_T$ is continuous and differentiable everywhere on $V_{r,s} \times \mathbb{R}_+^r$. Thus, we are done if we can show that $f_T$ has a full rank derivative almost everywhere. We write the derivative of $f_T$ in block form,
$$df_T = \begin{pmatrix}\frac{\partial f_T}{\partial \vA} & \frac{\partial f_T}{\partial \vx} \end{pmatrix}, $$
where we have linearized the indices in $\vA$ (i.e., $\frac{\partial f_T}{\partial \vA}$ is a $|T| \times rs$ dimensional matrix). This will have full (row) rank if either $\frac{\partial f_T}{\partial \vA}$ or $\frac{\partial f_T}{\partial \vx}$ has full (row) rank. Looking at the formula \eqref{ft}, we can immediately see that
$$\frac{\partial f_T}{\partial \vx} = \vA_{:,T}^\dagger.$$
Since every $\vA$ in $V_{r,s}$ has full (column) rank, $\vA_{:,T}$ will also always have full (column) rank, so $\vA_{:,T}^\dagger$ always has full (row) rank. Thus, $\frac{\partial f_T}{\partial \vx}$ has full row-rank at every point in $V_{r,s} \times \mathbb{R}_+^r$, meaning $d f_T$ does as well, so we are done.

\end{proof}

We may now prove Theorem \ref{derivs_of_q}.

\begin{proof}[Proof of Theorem \ref{derivs_of_q}]
	Since $(\vA,\vx)$ is in $U_{r,s}$, there is a neighborhood $N$ of $(\vA, \vx)$ such that for every $(\vA', \vx')$ in $N$, $q(\vA', \vx')$ also has support $T$. Then for any $(\vA', \x')$ in $N$, by Lemma \ref{formula_for_q}, we have
	\begin{equation} \label{local_q}
	q_T(\vA', \vx') = {\vA'_{:,T}}^\dagger  \vx', \qquad \qquad q_{T^c}(\vA', \vx') = 0.
	\end{equation}
	From this, we can immediately see that differentiating $q$ with respect to $\vx'$ and plugging in $(\vA, \vx)$ gives equation \eqref{X_deriv_of_q}.

	On the other hand, by the second equation in \eqref{local_q}, if $\alpha \notin T$, then $q_{\alpha}(\vA', \vx') = 0$ for all $(\vA', \vx')$ in $N$, so
	$$\frac{\partial q_\alpha}{\partial \vA_{i,:}}(\vA', \vx') = 0$$
	for all $(\vA', \vx')$ in $N$, and therefore for $(\vA, \vx)$ in particular. Furthermore, if $\beta\notin T$, then neither ${\vA'_{:,T}}^\dagger  \vx'$ nor $0$ depends on $\vA'_{i,\beta}$, so
	$$\frac{\partial q}{\partial \vA_{i,\beta}}(\vA', \vx') = 0$$
	for all $(\vA', \vx') \in N$, and therefore for $(\vA, \vx)$ in particular. Thus, if either $\alpha \notin T$ or $\beta \notin T$, then
	$$\frac{\partial q_\alpha}{\partial \vA_{i,\beta}}(\vA, \vx) = 0.$$
	Equivalently, the above equation holds if $(\alpha,\beta) \in (T \times T)^c$. This gives the second equation in \eqref{A_deriv_of_q}.

	So all that is left is to show is that
	$$\left(\frac{\partial q}{\partial \vA_{i,:}}(\vA, \vx)\right)_{T,T} = -\left(\vA_{:,T}^\dagger \right)_{:,i}\left(\vA_{:,T}^\dagger  \vx\right)^\top + \left(\left(\vI-\vA_{:,T}\vA_{:,T}^\dagger \right)\vx\right)_{i}\vA_{:,T}^\dagger \left(\vA_{:,T}^\dagger \right)^\top.$$
	We have that
	$$\left(\frac{\partial q}{\partial \vA_{i,:}}(\vA', \vx')\right)_{T,T} = \frac{\partial q_T}{\partial \vA_{i,T}}(\vA', \vx') = \frac{\partial}{\partial \vA'_{i,T}}\left({\vA'_{:,T}}^\dagger  \vx' \right)$$
	for all $(\vA', \vx')$ in $N$ by the first equation \eqref{local_q}. In order to compute the last derivative above, we appeal to the formula for the derivative of the pseudoinverse operation. Specifically, if $B$ is a matrix, then we have that
	$$\frac{\partial (\vF^\dagger )}{\partial \vF_{i, \alpha}} = - \vF^\dagger  \vE^{i, \alpha} \vF^\dagger  + \vF^\dagger  {\vF^\dagger }^\top {\vE^{i, \alpha}}^\top(\vI-\vF\vF^\dagger ) + (\vI-\vF^\dagger  \vF){\vE^{i, \alpha}}^\top {\vF^\dagger }^\top \vF^\dagger,$$
	where $\vE^{i, \alpha}$ is a matrix of the same size as $\vF$, all of whose entries are $0$, except for the $(i,\alpha)^{th}$ entry, which is $1$. See Theorem 4.3 in \cite{GP73} for a proof of this formula. Eventually, we will plug in $\vA'_{:,T}$ in for $\vF$ in the formula above, and so since $\vA'$ has full column rank (as $(\vA, \vx) \in U_{r,s}$), so does $\vA'_{:,T}$, and so we may assume that $\vF$ has full column rank. In particular, this means that $\vI-\vF^\dagger \vF=0$, so the third term of the equation above drops out, giving
	$$\frac{\partial (\vF^\dagger )}{\partial \vF_{i, \alpha}} = - \vF^\dagger  \vE^{i, \alpha} \vF^\dagger  + \vF^\dagger  {\vF^\dagger }^\top {\vE^{i, \alpha}}^\top(\vI-\vF\vF^\dagger ).$$
	Note that, by our definition of $\vE^{i,\alpha}$, we have that for any matrices $\vF$ and $\vG$, we have
	$$(\vF' \vE^{i,\alpha}\vG')_{\beta,j} = \vF'_{\beta,i}\vG'_{\alpha,j},$$
	and, by taking transposes,
	$$(\vF'' {\vE^{i,\alpha}}^\top \vG'')_{\beta,j} = \vF''_{\beta,\alpha}\vG''_{i,j}.$$
	Thus,
	$$\frac{\partial (\vF^\dagger )_{\beta,j}}{\partial \vF_{i, \alpha}} = - (\vF^\dagger )_{\beta,i} (\vF^\dagger )_{\alpha,j} + (\vF^\dagger  {
		\vF^\dagger }^\top)_{\beta,\alpha} (\vI-\vF\vF^\dagger )_{i,j}.$$
	Then we get
	\begin{align*}
	\frac{\partial \left(\vF^\dagger \vx' \right)_\beta}{\partial \vF_{i,\alpha}} &= \frac{\partial \left((\vF^\dagger )_{\beta,:}\vx' \right)}{\partial \vF_{i,\alpha}} \\&= \frac{\partial (\vF^\dagger )_{\beta,:}}{\partial \vF_{i,\alpha}}\vx' \\&= \left(- (\vF^\dagger )_{\beta,i} (\vF^\dagger )_{\alpha,:} + (\vF^\dagger  {
		\vF^\dagger }^\top)_{\beta,\alpha} (\vI-\vF\vF^\dagger )_{i,:}\right)\vx' \\&= - (\vF^\dagger )_{\beta,i} (\vF^\dagger  \vx')_{\alpha} +\left((\vI-\vF\vF^\dagger ) \vx' \right)_i (\vF^\dagger  {\vF^\dagger }^\top)_{\beta,\alpha} .
	\end{align*}
	Now, the matrix $ (\vF^\dagger )_{:,i} (\vF^\dagger  \vx')^\top$ has $(\beta,\alpha)^{th}$ entry equal to $ (\vF^\dagger )_{\beta,i} (\vF^\dagger  \vx')_{\alpha}$, and so we get that
	$$\frac{\partial (\vF^\dagger  \vx')}{\partial \vF_{i,:}} = - (\vF^\dagger )_{:,i} (\vF^\dagger  \vx')^\top +\left((\vI-\vF\vF^\dagger ) \vx' \right)_i \vF^\dagger  {\vF^\dagger }^\top. $$
	Setting $\vF = \vA'_{:,T}$, this gives
	$$\left(\frac{\partial q}{\partial \vA_{i,:}}(\vA', \vx')\right)_{T,T} = \frac{\partial}{\partial \vA'_{i,T}}\left({\vA'_{:,T}}^\dagger  \vx' \right) = -\left({\vA'_{:,T}}^\dagger \right)_{:,i}\left({\vA'_{:,T}}^\dagger  \vx\right)^\top + \left(\left(\vI-\vA'_{:,T}{\vA'_{:,T}}^\dagger \right)\vx\right)_{i}{\vA'_{:,T}}^\dagger \left({\vA'_{:,T}}^\dagger \right)^\top,$$
	which holds for all $(\vA',\vx')$ in $N$. Plugging in $(\vA, \vx)$ for $(\vA', \vx')$ gives the desired result.

\end{proof}

We finally can prove Theorem \ref{full_backprop}, which gives the required derivatives to apply backpropagation.

\begin{proof}[Proof of Theorem \ref{full_backprop}]
	The chain rule tells us that
	$$\frac{\partial \loss}{\partial \vA^{(\ell_1)}} = \left(\frac{\partial \loss}{\partial \vA^{(\ell_1)}} \right)^{\vS} + \sum_{\substack{\ell_1 \leq \ell_2 \leq \cL \\ 1 \leq m \leq M \\ 1 \leq \alpha \leq k^{(\ell_2)}}}\left(\frac{\partial \loss}{\partial \vS^{(\ell_2)}_{\alpha,m}}\right)^\holdconstant \left( \frac{\partial \vS^{(\ell_2)}_{\alpha,m}}{\partial \vA^{(\ell_1)}}\right),$$
	since $\vS^{(\ell_2)}_{\alpha,m}$ for $\ell_1 \leq \ell_2 \leq \cL$ are the only variables which $C$ depends on, which themselves depend on $\vA^{(\ell_1)}$. Therefore we are done if we can show that
	$$\sum_{\alpha=1}^{k^{(\ell_2)}} \left(\frac{\partial \loss}{\partial \vS^{(\ell_2)}_{\alpha,m}}\right)^\holdconstant \left(  \frac{\partial \vS^{(\ell_2)}_{\alpha,m}}{\partial \vA^{(\ell_1)}}\right) = \vU^{(\ell_1,\ell_2),m}.$$

	Now, we apply Lemma \ref{q_acts_columnwise} to the second equation in \eqref{S_def} to get
	\begin{equation} \label{S_cols}
	\vS^{(\ell)}_{:,m} = q(\vA^{(\ell)},
	\vS^{(\ell-1)}_{:,m}).
	\end{equation}
	By repeated application of the chain rule to this equation, we get that
	\begin{equation} \label{big_product}
	\left(  \frac{\partial \vS^{(\ell_2)}_{:,m}}{\partial \vS^{(\ell_1)}_{:,m}}\right) = \left(  \frac{\partial \vS^{(\ell_2)}_{:,m}}{\partial \vS^{(\ell_2-1)}_{:,m}}\right)\left(  \frac{\partial \vS^{(\ell_2-1)}_{:,m}}{\partial \vS^{(\ell_1-2)}_{:,m}}\right) \dots \left(  \frac{\partial \vS^{(\ell_1+1)}_{:,m}}{\partial \vS^{(\ell_1)}_{:,m}}\right).
	\end{equation}
	Applying Theorem \ref{derivs_of_q} to equation \ref{S_cols} gives
	\begin{equation} \label{s_derivs}
	\left(\frac{\partial \vS^{(\ell)}_{:,m}}{\partial \vS^{(\ell-1)}_{:,m}} \right)_{T_m^{(\ell)},:} = {\vA^{(\ell)}_{:,T_{m}^{(\ell)}}}^{\dagger}, \qquad \left(\frac{\partial \vS^{(\ell)}_{:,m}}{\partial \vS^{(\ell-1)}_{:,m}} \right)_{{T_m^{(\ell)}}^c,:} = 0.
	\end{equation}
	Now we use the following fact: if $\vF$ and $\vG$ are compatible matrices, $R$ is a subset of the row indices of $\vG$, and $\vG_{R^c,:} = 0$, then
	\begin{equation} \label{fact}
	\vF\vG = \vF_{:,R} \vG_{R,:}.
	\end{equation}
	Combining equation \eqref{big_product} and the second equation in \eqref{s_derivs} with this fact gives
	$$\left(  \frac{\partial \vS^{(\ell_2)}_{:,m}}{\partial \vS^{(\ell_1)}_{:,m}}\right) = \left(  \frac{\partial \vS^{(\ell_2)}_{:,m}}{\partial \vS^{(\ell_2-1)}_{:,m}}\right)_{:,T_m^{(\ell_2-1)}}\left(  \frac{\partial \vS^{(\ell_2-1)}_{:,m}}{\partial \vS^{(\ell_1-2)}_{:,m}}\right)_{T_m^{(\ell_2-1)},T_m^{(\ell_2-2)}} \dots \left(  \frac{\partial \vS^{(\ell_1+1)}_{:,m}}{\partial \vS^{(\ell_1)}_{:,m}}\right)_{T_m^{(\ell_1+1)},:}.$$
	And now we can apply the first equation in \eqref{s_derivs} to get
	\begin{equation} \label{dsds}
	\left(\frac{\partial \vS^{(\ell_2)}_{:,m}}{\partial \vS^{(\ell_1)}_{:,m}}\right)_{T_m^{(\ell_2)},:} = \left({\vA^{(\ell_2)}_{:,T_m^{(\ell_2)}}}^{\dagger}\right)_{:,T_m^{(\ell_2-1)}}\left({\vA^{(\ell_2-1)}_{:,T_m^{(\ell_2-1)}}}^{\dagger}\right)_{:,T_m^{(\ell_2-2)}}\dots \left({\vA^{(\ell_1+1)}_{:,T_m^{(\ell_1+1)}}}^{\dagger}\right) = \vPhi^{(\ell_1+1,\ell_2),m}, \qquad \left(\frac{\partial \vS^{(\ell_2)}_{:,m}}{\partial \vS^{(\ell_1)}_{:,m}}\right)_{{T_m^{(\ell_2)}}^c,:} = 0.
	\end{equation}
	Now, we apply the chain rule again to \ref{S_cols}, giving
	$$\left(  \frac{\partial \vS^{(\ell_2)}_{:,m}}{\partial \vA^{(\ell_1)}_{i,:}}\right) = \left(  \frac{\partial \vS^{(\ell_2)}_{:,m}}{\partial \vS^{(\ell_1)}_{:,m}}\right)\left(  \frac{\partial \vS^{(\ell_1)}_{:,m}}{\partial \vA^{(\ell_1)}_{i,:}}\right).$$
	On the other hand,
	$$\sum_{\alpha=1}^{k^{(\ell_2)}} \left(\frac{\partial \loss}{\partial \vS^{(\ell_2)}_{\alpha,m}}\right)^\holdconstant \left(  \frac{\partial \vS^{(\ell_2)}_{\alpha,m}}{\partial \vA^{(\ell_1)}_{i,:}}\right) =  \left(\left(\frac{\partial \loss}{\partial \vS^{(\ell_2)}_{:,m}}\right)^{\holdconstant}\right)^{\top} \left(  \frac{\partial \vS^{(\ell_2)}_{:,m}}{\partial \vA^{(\ell_1)}_{i,:}}\right),$$
	where $\left(\frac{\partial \loss}{\partial \vS^{(\ell_2)}_{:,m}}\right)$ is taken to be a column vector. Thus,
	$$\sum_{\alpha=1}^{k_{\ell_2}} \left(\frac{\partial \loss}{\partial \vS^{(\ell_2)}_{\alpha,m}}\right)^\holdconstant \left(  \frac{\partial \vS^{(\ell_2)}_{\alpha,m}}{\partial \vA^{(\ell_1)}_{i,:}}\right) =\left(\left(\frac{\partial \loss}{\partial \vS^{(\ell_2)}_{:,m}}\right)^{\holdconstant}\right)^{\top} \left(  \frac{\partial \vS^{(\ell_2)}_{:,m}}{\partial \vS^{(\ell_1)}_{:,m}}\right)\left(  \frac{\partial \vS^{(\ell_1)}_{:,m}}{\partial \vA^{(\ell_1)}_{i,:}}\right).$$
	We can apply equation \eqref{fact} again, using both parts of \ref{dsds}, giving
	$$\sum_{\alpha=1}^{k^{(\ell_2)}} \left(\frac{\partial \loss}{\partial \vS^{(\ell_2)}_{\alpha,m}}\right)^\holdconstant \left(  \frac{\partial \vS^{(\ell_2)}_{\alpha,m}}{\partial \vA^{(\ell_1)}_{i,:}}\right) = \left(\left(\left(\frac{\partial \loss}{\partial \vS^{(\ell_2)}_{:,m}}\right)^{\holdconstant}\right)^{\top} \right)_{:,T_m^{(\ell_2)}} \left(  \frac{\partial \vS^{(\ell_2)}_{:,m}}{\partial \vS^{(\ell_1)}_{:,m}}\right)_{T_m^{(\ell_2)},:}\left(  \frac{\partial \vS^{(\ell_1)}_{:,m}}{\partial \vA^{(\ell_1)}_{i,:}}\right) = \left(\left(\frac{\partial \loss}{\partial \vS^{(\ell_2)}}\right)^\holdconstant_{T_m^{(\ell_2)},m} \right)^\top \vPhi^{(\ell_1+1,\ell_2),m}\left(  \frac{\partial \vS^{(\ell_1)}_{:,m}}{\partial \vA^{(\ell_1)}_{i,:}}\right).$$
	By Theorem \ref{derivs_of_q}, the rows of $\left(  \frac{\partial \vS^{(\ell_1)}_{:,m}}{\partial \vA^{(\ell_1)}_{i,:}}\right)$ with indices outside $T_{m}^{(\ell_1)}$, are zero, so we can apply our fact again to get
	$$\sum_{\alpha=1}^{k^{(\ell_2)}} \left(\frac{\partial \loss}{\partial \vS^{(\ell_2)}_{\alpha,m}}\right)^\holdconstant \left(  \frac{\partial \vS^{(\ell_2)}_{\alpha,m}}{\partial \vA^{(\ell_1)}_{i,:}}\right) = \left(\left(\frac{\partial \loss}{\partial \vS^{(\ell_2)}}\right)^\holdconstant_{T_m^{(\ell_2)},m} \right)^\top \vPhi^{(\ell_1+1,\ell_2),m}_{:,T_m^{(\ell_1)}}\left(  \frac{\partial \vS^{(\ell_1)}_{:,m}}{\partial \vA^{(\ell_1)}_{i,:}}\right)_{T_m^{(\ell_1)},:}.$$
	Now, Theorem \ref{derivs_of_q} again tells us that the columns of $\left(  \frac{\partial \vS^{(\ell_1)}_{:,m}}{\partial \vA^{(\ell_1)}_{r,:}}\right)$ outside $T_{m}^{(\ell_1)}$ are zero, so we have
	$$\left( \sum_{i=1}^{k_{\ell_2}} \left(\frac{\partial \loss}{\partial \vS^{(\ell_2)}_{i,m}}\right)^\holdconstant \left(  \frac{\partial \vS^{(\ell_2)}_{i,m}}{\partial \vA^{(\ell_1)}_{r,:}}\right)\right)_{:,{T_m^{(\ell_1)}}^c} = \left(\left(\frac{\partial \loss}{\partial \vS^{(\ell_2)}}\right)^\holdconstant_{T_m^{(\ell_2)},m} \right)^\top \vPhi^{(\ell_1+1,\ell_2),m}_{:,T_m^{(\ell_1)}}\left(  \frac{\partial \vS^{(\ell_1)}_{:,m}}{\partial \vA^{(\ell_1)}_{r,:}}\right)_{T_m^{(\ell_1)},{T_m^{(\ell_1)}}^c} = 0, $$
	which is what we wanted, since $\vU^{(\ell_1,\ell_2),m}_{:,{T_m^{(\ell_1)}}^c} = 0$. Thus, all that is left to show is that
	$$\left( \sum_{\alpha=1}^{k_{\ell_2}} \left(\frac{\partial \loss}{\partial \vS^{(\ell_2)}_{\alpha,m}}\right)^\holdconstant \left(  \frac{\partial \vS^{(\ell_2)}_{\alpha,m}}{\partial \vA^{(\ell_1)}_{i,:}}\right)\right)_{:,{T_m^{(\ell_1)}}} = \vU^{(\ell_1,\ell_2),m}_{i,T_m^{(\ell_1)}},$$
	i.e.,
	\begin{equation} \label{to_show}
	\left( \sum_{\alpha=1}^{k^{(\ell_2)}} \left(\frac{\partial \loss}{\partial \vS^{(\ell_2)}_{\alpha,m}}\right)^\holdconstant \left(  \frac{\partial \vS^{(\ell_2)}_{\alpha,m}}{\partial \vA^{(\ell_1)}_{i,:}}\right)\right)_{:,{T_m^{(\ell_1)}}} = - \vd^{(\ell_1,\ell_2),m}_{i}\left({\vS^{(\ell_1)}_{T_m^{(\ell_1)},m}}\right)^\top + \left(\vS^{(\ell_1 -1)}-\vA^{(\ell_1)}\vS^{(\ell_1)}\right)_{i,m} \left(\vd^{(\ell_1,\ell_2,),m}\right)^\top \left({\vA^{(\ell_1)}_{:,T_m^{(\ell_1)}}}^\dagger\right)^\top.
	\end{equation}
	Applying Theorem \ref{derivs_of_q} once more gives that
	\begin{align}
	\nonumber
	&\left( \sum_{\alpha=1}^{k^{(\ell_2)}} \left(\frac{\partial \loss}{\partial \vS^{(\ell_2)}_{\alpha,m}}\right)^\holdconstant \left(  \frac{\partial \vS^{(\ell_2)}_{\alpha,m}}{\partial \vA^{(\ell_1)}_{i,:}}\right)\right)_{:,{T_m^{(\ell_1)}}} \\ \nonumber &= \left(\left(\frac{\partial \loss}{\partial \vS^{(\ell_2)}}\right)^\holdconstant_{T_m^{(\ell_2)},m} \right)^\top \vPhi^{(\ell_1+1,\ell_2),m}_{:,T_m^{(\ell_1)}}\left(  \frac{\partial \vS^{(\ell_1)}_{:,m}}{\partial \vA^{(\ell_1)}_{i,:}}\right)_{T_m^{(\ell_1)},{T_m^{(\ell_1)}}} \\  &= \left(\left(\frac{\partial \loss}{\partial \vS^{(\ell_2)}}\right)^\holdconstant_{T_m^{(\ell_2)},m} \right)^\top \vPhi^{(\ell_1+1,\ell_2),m}_{:,T_m^{(\ell_1)}} \Bigg( -\left({\vA^{(\ell_1)}_{:,T_m^{(\ell_1)}}}^\dagger\right)_{:,i}\left({\vA^{(\ell_1)}_{:,T_m^{(\ell_1)}}}^\dagger \vS^{(\ell_1-1)}_{:,m}\right)^\top \\&\label{current} \hphantom{\left(\left(\frac{\partial \loss}{\partial \vS^{(\ell_2)}}\right)_{T_m^{(\ell_2)},m} \right)^\top \vPhi^{(\ell_1+1,\ell_2),m}_{:,T_m^{(\ell_1)}} + \Bigg (  }+ \left(\left(\vI-\vA^{(\ell_1)}_{:,T_m^{(\ell_1)}} {\vA^{(\ell_1)}_{:,T_m^{(\ell_1)}}}^\dagger\right)\vS^{(\ell_1-1)}_{:,m}\right)_{i}{\vA^{(\ell_1)}_{:,T_m^{(\ell_1)}}}^\dagger\left({\vA^{(\ell_1)}_{:,T_m^{(\ell_1)}}}^\dagger\right)^\top\Bigg).
	\end{align}
	Now, using equation \eqref{S_cols} and Lemma \ref{formula_for_q}, we get that
	\begin{equation} \label{incr_s}
	{\vA^{(\ell_1)}_{:,T_m^{(\ell_1)}}}^\dagger \vS^{(\ell_1-1)}_{:,m} = \vS^{(\ell_1)}_{T_m^{(\ell_1)},m}.
	\end{equation}
	Thus, we have,
	\begin{align} \nonumber
	\left(\left(\frac{\partial \loss}{\partial \vS^{(\ell_2)}}\right)^\holdconstant_{T_m^{(\ell_2)},m} \right)^\top \vPhi^{(\ell_1+1,\ell_2),m}_{:,T_m^{(\ell_1)}} &\left( -\left({\vA^{(\ell_1)}_{:,T_m^{(\ell_1)}}}^\dagger\right)_{:,i}\left({\vA^{(\ell_1)}_{:,T_m^{(\ell_1)}}}^\dagger \vS^{(\ell_1-1)}_{:,m}\right)^\top \right) \\ \label{first_term} &= - \left(\left(\left(\frac{\partial \loss}{\partial \vS^{(\ell_2)}}\right)^\holdconstant_{T_m^{(\ell_2)},m} \right)^\top \vPhi^{(\ell_1+1,\ell_2),m}_{:,T_m^{(\ell_1)}} {\vA^{(\ell_1)}_{:,T_m^{(\ell_1)}}}^\dagger \right)_{:,i} \left( \vS^{(\ell_1)}_{T_m^{(\ell_1)},m}\right)^\top.
	\end{align}
	From the definition of $\vPhi$, we see that
	\begin{equation} \label{phi_recursive}
	\vPhi^{(\ell_1+1,\ell_2),m}_{:,T_m^{(\ell_1)}} {\vA^{(\ell_1)}_{:,T_m^{(\ell_1)}}}^\dagger  = \vPhi^{(\ell_1,\ell_2),m},
	\end{equation}
	so
	$$\left(\left(\frac{\partial \loss}{\partial \vS^{(\ell_2)}}\right)^\holdconstant_{T_m^{(\ell_2)},m} \right)^\top \vPhi^{(\ell_1+1,\ell_2),m}_{:,T_m^{(\ell_1)}} {\vA^{(\ell_1)}_{:,T_m^{(\ell_1)}}}^\dagger = \left(\vd^{(\ell_1,\ell_2),m} \right)^\top.$$
	Since $\vd^{(\ell_1,\ell_2),m}$ is a column vector, we get
	$$\left(\left(\left(\frac{\partial \loss}{\partial \vS^{(\ell_2)}}\right)^\holdconstant_{T_m^{(\ell_2)},m} \right)^\top \vPhi^{(\ell_1+1,\ell_2),m}_{:,T_m^{(\ell_1)}} {\vA^{(\ell_1)}_{:,T_m^{(\ell_1)}}}^\dagger\right)_{:,i} = \vd^{(\ell_1,\ell_2),m}_i.$$
	Plugging this into \ref{first_term}, we get
	$$\left(\left(\frac{\partial \loss}{\partial \vS^{(\ell_2)}}\right)^\holdconstant_{T_m^{(\ell_2)},m} \right)^\top \vPhi^{(\ell_1+1,\ell_2),m}_{:,T_m^{(\ell_1)}} \left( -\left({\vA^{(\ell_1)}_{:,T_m^{(\ell_1)}}}^\dagger\right)_{:,i}\left({\vA^{(\ell_1)}_{:,T_m^{(\ell_1)}}}^\dagger \vS^{(\ell_1-1)}_{:,m}\right)^\top \right) = -\vd^{(\ell_1,\ell_2),m}_i\left( \vS^{(\ell_1)}_{T_m^{(\ell_1)},m}\right)^\top.$$
	Thus, from equations \eqref{current} and \eqref{to_show}, it suffices to show that
	\begin{align*}
	\left(\left(\frac{\partial \loss}{\partial \vS^{(\ell_2)}}\right)^\holdconstant_{T_m^{(\ell_2)},m} \right)^\top \vPhi^{(\ell_1+1,\ell_2),m}_{:,T_m^{(\ell_1)}} &\left(\left(\vI-\vA^{(\ell_1)}_{:,T_m^{(\ell_1)}} {\vA^{(\ell_1)}_{:,T_m^{(\ell_1)}}}^\dagger\right)\vS^{(\ell_1-1)}_{:,m}\right)_{i}{\vA^{(\ell_1)}_{:,T_m^{(\ell_1)}}}^\dagger\left({\vA^{(\ell_1)}_{:,T_m^{(\ell_1)}}}^\dagger\right)^\top \\&= \left(\vS^{(\ell_1 -1)}-\vA^{(\ell_1)}\vS^{(\ell_1)}\right)_{i,m} \left(\vd^{(\ell_1,\ell_2,),m}\right)^\top \left({\vA^{(\ell_1)}_{:,T_m^{(\ell_1)}}}^\dagger\right)^\top.
	\end{align*}
	First, we note that, by equation \eqref{incr_s}, we have
	$$\left(\vI-\vA^{(\ell_1)}_{:,T_m^{(\ell_1)}} {\vA^{(\ell_1)}_{:,T_m^{(\ell_1)}}}^\dagger\right)\vS^{(\ell_1-1)}_{:,m} = \vS^{\ell_1-1}_{:,m} - \vA^{(\ell_1)}_{:,T_m^{(\ell_1)}}\vS^{(\ell_1)}_{T_m^{(\ell_1)},m}.$$
	Since $\vS^{(\ell_1)}_{{T_m^{(\ell_1)}}^c,m} = 0$, we apply equation \eqref{fact} once more to get that
	$$\vA^{(\ell_1)}_{:,T_m^{(\ell_1)}}\vS^{(\ell_1)}_{T_m^{(\ell_1)},m} = \vA^{(\ell_1)}\vS^{(\ell_1)}_{:,m}.$$
	Thus,
	$$\left(\left(\vI-\vA^{(\ell_1)}_{:,T_m^{(\ell_1)}} {\vA^{(\ell_1)}_{:,T_m^{(\ell_1)}}}^\dagger\right)\vS^{(\ell_1-1)}_{:,m}\right)_i = \left(\vS^{\ell_1-1}_{:,m} - \vA^{(\ell_1)}_{:,T_m^{(\ell_1)}}\vS^{(\ell_1)}_{T_m^{(\ell_1)},m}\right)_i =\left(\vS^{\ell_1-1}_{:,m} - \vA^{(\ell_1)}\vS^{(\ell_1)}_{:,m}\right)_i  =\left(\vS^{\ell_1-1}- \vA^{(\ell_1)}\vS^{(\ell_1)}\right)_{i,m}.$$
	This gives
	\begin{align*}
	\left(\left(\frac{\partial \loss}{\partial \vS^{(\ell_2)}}\right)^\holdconstant_{T_m^{(\ell_2)},m} \right)^\top \vPhi^{(\ell_1+1,\ell_2),m}_{:,T_m^{(\ell_1)}} &\left(\left(\vI-\vA^{(\ell_1)}_{:,T_m^{(\ell_1)}} {\vA^{(\ell_1)}_{:,T_m^{(\ell_1)}}}^\dagger\right)\vS^{(\ell_1-1)}_{:,m}\right)_{i}{\vA^{(\ell_1)}_{:,T_m^{(\ell_1)}}}^\dagger\left({\vA^{(\ell_1)}_{:,T_m^{(\ell_1)}}}^\dagger\right)^\top \\&= \left(\vS^{\ell_1-1}- \vA^{(\ell_1)}\vS^{(\ell_1)}\right)_{i,m}\left(\left(\frac{\partial \loss}{\partial \vS^{(\ell_2)}}\right)^\holdconstant_{T_m^{(\ell_2)},m} \right)^\top \vPhi^{(\ell_1+1,\ell_2),m}_{:,T_m^{(\ell_1)}} {\vA^{(\ell_1)}_{:,T_m^{(\ell_1)}}}^\dagger\left({\vA^{(\ell_1)}_{:,T_m^{(\ell_1)}}}^\dagger\right)^\top.
	\end{align*}
	Using equation \eqref{phi_recursive} and the definition of $\vd$, we get that
	$$\left(\left(\frac{\partial \loss}{\partial \vS^{(\ell_2)}}\right)^\holdconstant_{T_m^{(\ell_2)},m} \right)^\top \vPhi^{(\ell_1+1,\ell_2),m}_{:,T_m^{(\ell_1)}} {\vA^{(\ell_1)}_{:,T_m^{(\ell_1)}}}^\dagger = \left(\left(\frac{\partial \loss}{\partial \vS^{(\ell_2)}}\right)^\holdconstant_{T_m^{(\ell_2)},m} \right)^\top \Phi^{(\ell_1,\ell_2),m} = \left(\vd^{(\ell_1,\ell_2),m} \right)^\top.$$
	Thus,
	\begin{align*}
	\left(\left(\frac{\partial \loss}{\partial \vS^{(\ell_2)}}\right)^\holdconstant_{T_m^{(\ell_2)},m} \right)^\top \vPhi^{(\ell_1+1,\ell_2),m}_{:,T_m^{(\ell_1)}} &\left(\left(\vI-\vA^{(\ell_1)}_{:,T_m^{(\ell_1)}} {\vA^{(\ell_1)}_{:,T_m^{(\ell_1)}}}^\dagger\right)\vS^{(\ell_1-1)}_{:,m}\right)_{i}{\vA^{(\ell_1)}_{:,T_m^{(\ell_1)}}}^\dagger\left({\vA^{(\ell_1)}_{:,T_m^{(\ell_1)}}}^\dagger\right)^\top \\&= \left(\vS^{\ell_1-1}- \vA^{(\ell_1)}\vS^{(\ell_1)}\right)_{i,m}\left(\vd^{(\ell_1,\ell_2),m} \right)^\top\left({\vA^{(\ell_1)}_{:,T_m^{(\ell_1)}}}^\dagger\right)^\top,
	\end{align*}
	which is what we wanted.

\end{proof}

\end{document}